\theoremstyle{plain}
\newtheorem{theorem}{Theorem}[section]
\newtheorem{proposition}[theorem]{Proposition}
\newtheorem{lemma}[theorem]{Lemma}
\newtheorem{corollary}[theorem]{Corollary}
\theoremstyle{definition}
\theoremstyle{remark}
\title{R-divergence for Estimating Model-oriented Distribution Discrepancy}
\author{%
  Zhilin Zhao \qquad Longbing Cao \\
  Data Science Lab, School of Computing \& DataX Research Centre\\
  Macquarie University, Sydney, NSW 2109, Australia \\
  \texttt{zhaozhl7@hotmail.com, longbing.cao@mq.edu.au} \\
}
\begin{document}

\maketitle

\begin{abstract}
Real-life data are often non-IID due to complex distributions and interactions, and the sensitivity to the distribution of samples can differ among learning models. Accordingly, a key question for any supervised or unsupervised model is whether the probability distributions of two given datasets can be considered identical. To address this question, we introduce R-divergence, designed to assess model-oriented distribution discrepancies. The core insight is that two distributions are likely identical if their optimal hypothesis yields the same expected risk for each distribution. To estimate the distribution discrepancy between two datasets, R-divergence learns a minimum hypothesis on the mixed data and then gauges the empirical risk difference between them. We evaluate the test power across various unsupervised and supervised tasks and find that R-divergence achieves state-of-the-art performance. To demonstrate the practicality of R-divergence, we employ R-divergence to train robust neural networks on samples with noisy labels.
\end{abstract}

\section{Introduction}
\label{sec:intro}
Most machine learning methods rely on the basic independent and identically distributed (IID) assumption~\cite{bishop2006pattern}, implying that variables are drawn independently from the same distribution. However, real-life data and machine learning models often deviate from this IID assumption due to various complexities, such as mixture distributions and interactions~\cite{NIID:22}. For a given supervised or unsupervised task~\cite{ML:12}, a learning model derives an optimal hypothesis from a hypothesis space by optimizing the corresponding expected risk~\cite{ML:14}. However, this optimal hypothesis may not be valid if the distributions of training and test samples differ~\cite{CS:07}, resulting in distributional vulnerability \cite{9987694}. In fact, an increasing body of research addresses complex non-IID scenarios~\cite{BIID:22}, including open-world problems~\cite{OS:16}, out-of-distribution detection~\cite{BL:17,9987694}, domain adaptation~\cite{DA:21}, and learning with noisy labels~\cite{NL:13}. These scenarios raise a crucial yet challenging question: \textit{how to assess the discrepancy between two probability distributions for a specific learning model?}

Empirically, the discrepancy between two probability distributions can be assessed by the divergence between their respective datasets. However, the actual underlying distributions are often unknown, with only a limited number of samples available for observation. Whether the samples from two datasets come from the same distribution is contingent upon the specific learning model being used. This is because different learning models possess unique hypothesis spaces, loss functions, target functions, and optimization processes, which result in varying sensitivities to distribution discrepancies. For instance, in binary classification~\cite{BC:18}, samples from two datasets might be treated as positive and negative, respectively, indicating that they stem from different distributions. Conversely, a model designed to focus on invariant features is more likely to treat all samples as if they are drawn from the same distribution. For example, in one-class classification~\cite{DO:18}, all samples are considered positive, even though they may come from different components of a mixture distribution. In cases where the samples from each dataset originate from multiple distributions, they can be viewed as drawn from a complex mixture distribution. We can then estimate the discrepancy between these two complex mixture distributions. This suggests that, in practice, it is both meaningful and necessary to evaluate the distribution discrepancy for a specific learning model, addressing the issue of \textit{model-oriented distribution discrepancy evaluation}.

Estimating the discrepancy between two probability distributions has been a foundational and challenging issue. Various metrics for this task have been proposed, including F-divergence~\cite{JD:64}, integral probability metrics (IPM)\cite{IPM:21}, and H-divergence\cite{HD:22}. F-divergence metrics, like Kullback-Leibler (KL) divergence~\cite{KL:75} and Jensen Shannon divergence~\cite{JS:19}, assume that two distributions are identical if they possess the same likelihood at every point. Importantly, F-divergence is not tied to a specific learning model, as it directly measures discrepancy by computing the statistical distance between the two distributions. IPM metrics, including the Wasserstein distance~\cite{WD:05}, maximum mean discrepancy (MMD)\cite{MMD:14}, and L-divergence\cite{DD:10,DS:04}, assess discrepancy based on function outputs. They postulate that any function should yield the same expectation under both distributions if they are indeed identical, and vice versa. In this vein, L-divergence explores the hypothesis space for a given model and regards the largest gap in expected risks between the two distributions as their discrepancy. This necessitates evaluating all hypotheses within the hypothesis space, a task which can be computationally demanding. Moreover, hypotheses unrelated to the two distributions could result in inaccurate or biased evaluations. On the other hand, H-divergence~\cite{HD:22} contends that two distributions are distinct if the optimal decision loss is greater when computed on their mixed distribution compared to each individual one. It calculates this optimal decision loss in terms of training loss, meaning it trains a minimal hypothesis and evaluates its empirical risk on the identical dataset. A network with limited training data is susceptible to overfitting~\cite{PCO:17}, which can lead to underestimated discrepancies when using H-divergence, even if the two distributions are significantly different.

To effectively gauge the model-oriented discrepancy between two probability distributions, we introduce R-divergence. This novel metric is built on the notion that \textit{two distributions are identical if the optimal hypothesis for their mixture distribution yields the same expected risk on each individual distribution}. In line with this concept, R-divergence evaluates the discrepancy by employing an empirical estimator tailored to the given datasets and learning model. Initially, a minimum hypothesis is derived from the mixed data of the two datasets to approximate the optimal hypothesis. Subsequently, empirical risks are computed by applying this minimum hypothesis to each of the two individual datasets. Finally, the difference in these empirical risks serves as the empirical estimator for the discrepancy between the two probability distributions. The framework of R-divergence is illustrated in \figurename~\ref{fig:flow}.

\section{Model-oriented Two-sample Test}
\label{sec:bg}
We measure the discrepancy between two probability distributions $p$ and $q$ for a specific supervised or unsupervised learning method. Let $\mathcal{X}$ and $\mathcal{Y}$ be the spaces of inputs and labels, respectively. Assume we observe two sets of $N$ IID samples from the two distributions, i.e., $\widehat{p} = \{x_i\}_{i = 1}^N \sim p$, and $\widehat{q} = \{x_i'\}_{i = 1}^N \sim q$, respectively. We assume the mixture distribution $u = (p + q) / 2$ and its corresponding mixed data $\widehat{u} = \widehat{p} \cup \widehat{q}$.

\begin{figure}
\vskip 0.2in
\begin{center}
\centerline{\includegraphics[width=0.4\columnwidth]{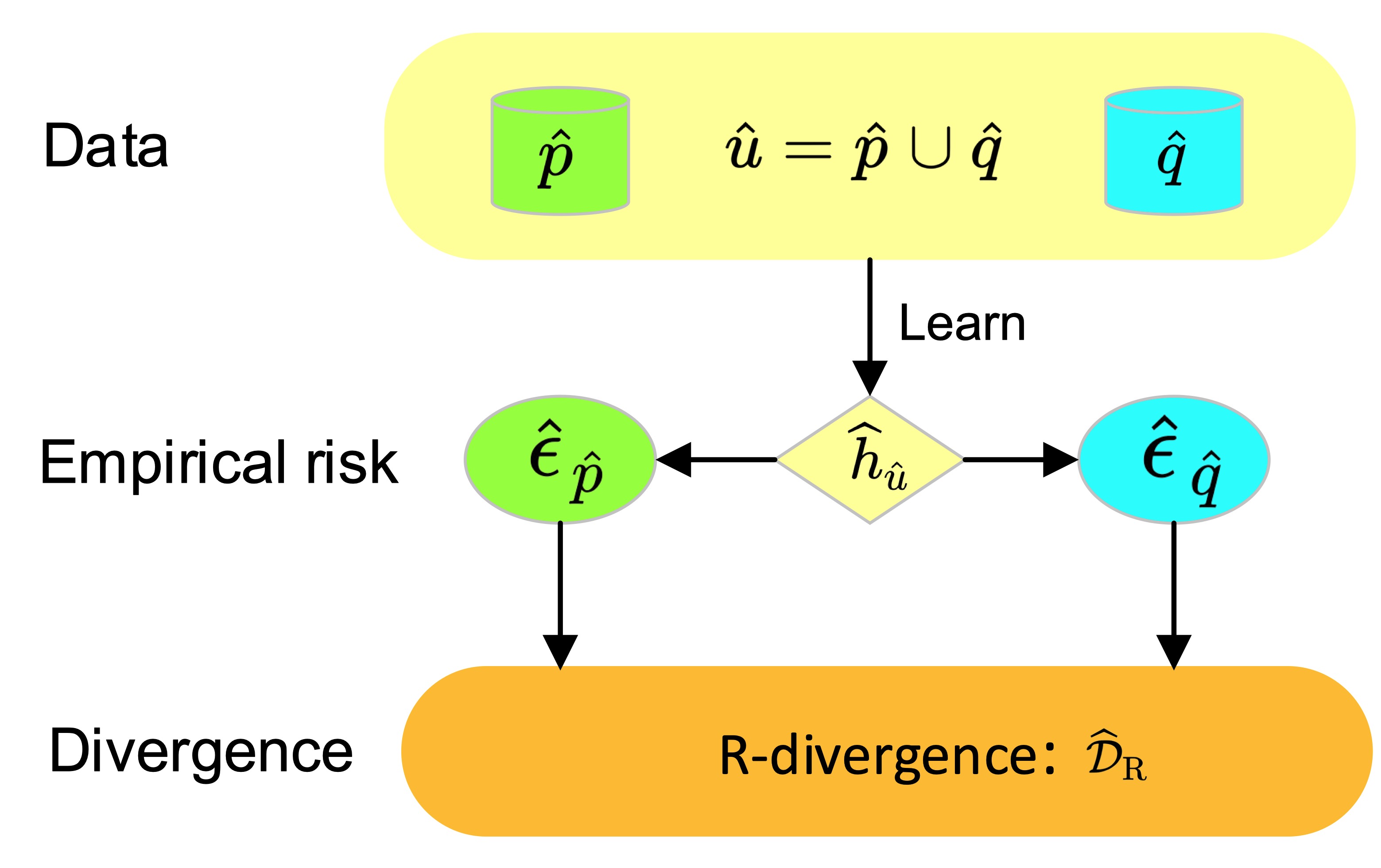}}
\caption{R-divergence estimates the distribution discrepancy on datasets $\widehat{p}$ and $\widehat{q}$. $\widehat{h}_{\widehat{u}}$ represents the minimum hypothesis learned on the mixed data $\widehat{u} = \widehat{p} \cup \widehat{q}$. $\widehat{\epsilon}_{\widehat{p}}$ and $\widehat{\epsilon}_{\widehat{q}}$ represent the empirical risks of the minimum hypothesis on data $\widehat{p}$ and $\widehat{q}$, respectively. R-divergence $\widehat{\mathcal{D}}_{\text{R}}$ estimates the distribution discrepancy by the empirical risk difference between the two datasets.}
\label{fig:flow}
\end{center}
\vskip -0.2in
\end{figure}

For a learning model $\mathcal{T}$, we assume $\mathcal{H}$ is its hypothesis space and $l$ is its $L$-Lipschitz bounded loss function, i.e., $\vert l(\cdot,\cdot) \vert \leq c$. For an input $x \in \mathcal{X}$, we suppose $\Vert x\Vert \leq B$ and $a \in \mathcal{A}$ is its corresponding target function for the model. For example, $a(x) = y \in \mathcal{Y}$ for supervised classification and $a(x) = x \in \mathcal{X}$ for unsupervised input reconstruction. In general, a loss function can be defined as $l(h(x), a(x))$ for any hypothesis $h \in \mathcal{H}$ and input $x \in \mathcal{X}$. $h(x)$ represents the hypothesis output for $x$. For example, $h(x)$ produces a predicted label by a classifier or the reconstructed input by unsupervised input reconstruction. For the mixture distribution $u$ and its corresponding dataset $\widehat{u}$, we define the expected risk $\epsilon_u (h)$ and empirical risk $\widehat{\epsilon}_{\widehat{u}}(h)$ for any $h \in \mathcal{H}$~\cite{ERM:17}:
\begin{equation}
\begin{aligned}
\epsilon_u (h) & = \mathbb{E}_{x \sim u} l(h(x), a(x)), \widehat{\epsilon}_{\widehat{u}}(h) = \frac{1}{2N} \sum_{x \in \widehat{u} } l(h(x), a(x)).
\end{aligned}
\end{equation}
Then, we define the optimal hypothesis $h^*_u$ and the minimum hypothesis $\widehat{h}_u$, respectively,
\begin{equation}
\begin{aligned}
h^*_u \in \arg \min_{h \in \mathcal{H}} \epsilon_{u}(h), \widehat{h}_u \in \arg \min_{h \in \mathcal{H}} \widehat{\epsilon}_{\widehat{u}}(h).
\end{aligned}
\end{equation}
A learning model of the task aims to learn the minimum hypothesis $\widehat{h}$ on observed data to approximate the optimal hypothesis $h^*$. This aim can be converted to the \textit{model-oriented two-sample test}: For datasets $\widehat{p}$ and $\widehat{q}$ and their learning task with hypothesis space $\mathcal{H}$ and loss function $l$, can we determine whether their probability distributions are identical (e.g., $p = q$) for the specific model? This research question is equivalent to: whether the samples of the two datasets can be treated as sampled from the same distribution for the model?

\section{R-divergence for Distribution Discrepancy}
\label{sec:pro}

To fulfill the model-oriented two-sample test, we develop R-divergence to estimate the discrepancy between two probability distributions $p$ and $q$ for a model $\mathcal{T}$. R-divergence regards two distributions as identical if the optimal hypothesis on their mixture distribution has the same expected risk on each individual distribution. Accordingly, with the corresponding datasets $\widehat{p}$ and $\widehat{q}$, R-divergence estimates the optimal hypothesis $h^*_u$ on the mixture distribution $u$ by learning the minimum hypothesis $\widehat{h}_{\widehat{u}}$ on the mixed dataset $\widehat{u}$. Then, R-divergence estimates the expected risks $\epsilon_p (h^*_u)$ and $\epsilon_q (h^*_u)$ on the two distributions by evaluating the empirical risks $\widehat{\epsilon}_{\widehat{p}} (\widehat{h}_{\widehat{u}})$ and $\widehat{\epsilon}_{\widehat{q}} (\widehat{h}_{\widehat{u}})$ on their data samples. R-divergence measures the empirical risk difference between $\widehat{\epsilon}_{\widehat{p}} (\widehat{h}_{\widehat{u}}) $ and $ \widehat{\epsilon}_{\widehat{q}} (\widehat{h}_{\widehat{u}})$ to estimate the distribution discrepancy.

Accordingly, for a supervised or unsupervised model with hypothesis space $\mathcal{H}$ and loss function $l$, R-divergence evaluates the discrepancy between distributions $p$ and $q$ by,
\begin{equation}\label{eq:smm}
\mathcal{D}_{\text{R}}(p \| q) = d(\epsilon_p (h^*_u) , \epsilon_q (h^*_u)) = \vert \epsilon_p (h^*_u) - \epsilon_q (h^*_u) \vert.
\end{equation}
$d(\cdot,\cdot)$ represents the absolute difference. Intuitively, if $p = q$, then $u = p = q$ and $h^*_u$ is the optimal hypothesis for both $p$ and $q$. Thus, the expected risks on the two distributions are the same, which leads to a zero discrepancy. In contrast, if $p \neq q$, $h^*_u$ is the optimal hypothesis for $u$, and it performs differently on $p$ and $q$. Then, the expected risk difference can evaluate the discrepancy between the two distributions for the specific model. Different from F-divergence without considering specific learning models, R-divergence involves the property of the specific model in evaluating the model-oriented distribution discrepancy conditional on the optimal hypothesis. Specifically, R-divergence infers the discrepancy by comparing the distance between the learning results on two distributions. The learning result is sensitive to the learning model, i.e., its hypothesis space and loss function.

Further, $\mathcal{D}_{\text{R}}(p \| q)$ can be estimated on datasets $\widehat{p}$ and $\widehat{q}$. R-divergence estimates the discrepancy by searching a minimum hypothesis $\widehat{h}_u$ on the mixed dataset $\widehat{u} = \widehat{p} \cup \widehat{q}$, and calculates the empirical risks $\widehat{\epsilon}_{\widehat{p}}(\widehat{h}_u)$ and $\widehat{\epsilon}_{\widehat{q}}(\widehat{h}_u)$ on each individual dataset. Then, $\mathcal{D}_{\text{R}}(p \| q)$ can be estimated by the empirical risk difference $\mathcal{\widehat{D}}_{\text{R}}(p \| q)$:
\begin{equation}\label{eq:est}
\mathcal{\widehat{D}}_{\text{R}}(\widehat{p} \| \widehat{q}) = d(\widehat{\epsilon}_{\widehat{p}} (\widehat{h}_{\widehat{u}}) , \widehat{\epsilon}_{\widehat{q}} (\widehat{h}_{\widehat{u}})) = \vert \widehat{\epsilon}_{\widehat{p}} (\widehat{h}_{\widehat{u}}) - \widehat{\epsilon}_{\widehat{q}} (\widehat{h}_{\widehat{u}}) \vert.
\end{equation}
$\mathcal{\widehat{D}}_{\text{R}}(\widehat{p} \| \widehat{q})$ can be treated as an empirical estimator for the distribution discrepancy $\mathcal{D}_{\text{R}}(p \| q) $.

Because $\widehat{h}_u$ is the minimum hypothesis on the mixed dataset $\widehat{u}$, $\widehat{h}_u$ will overfit neither $\widehat{p}$ nor $\widehat{q}$. Specifically, both $\widehat{\epsilon}_{\widehat{p}}(\widehat{h}_u)$ and $\widehat{\epsilon}_{\widehat{q}}(\widehat{h}_u)$ will not tend towards zero when $\widehat{p}$ and $\widehat{q}$ are different. This property ensures that the empirical estimator will not equal zero for different given datasets. Further, as $\widehat{u}$ consists of $\widehat{p}$ and $\widehat{q}$, both $\widehat{\epsilon}_{\widehat{p}}(\widehat{h}_u)$ and $\widehat{\epsilon}_{\widehat{q}}(\widehat{h}_u)$ will not tend towards extremely large values, which ensures a stable empirical estimator. In general, due to the property that the minimum hypothesis learned on the mixed data is applied to evaluate the empirical risk on each individual dataset, R-divergence can address the overfitting issue in H-divergence~\cite{HD:22}. The procedure of estimating the model-oriented discrepancy between two probability distributions is summarized in Algorithm~\ref{alg:smm} (see Appendix~\ref{ap:tp}).

R-divergence diverges substantially from L-divergence~\cite{DS:04} and H-divergence~\cite{HD:22}. While R-divergences and L-divergences apply to different hypotheses and scenarios, R-divergences and H-divergences differ in their assumptions on discrepancy evaluation and risk calculation. Notably, R-divergence enhances performance by mitigating the overfitting issues seen with H-divergence. H-divergence assumes that two distributions are different if the optimal decision loss is higher on their mixture than on individual distributions. As a result, H-divergence calculates low empirical risks for different datasets, leading to underestimated discrepancies. In contrast, R-divergence tackles this by assuming that two distributions are likely identical if the optimal hypothesis yields the same expected risk on each. Thus, R-divergence focuses on a minimum hypothesis based on mixed data and assesses its empirical risks on the two individual datasets. It treats the empirical risk gap as the discrepancy, ensuring both small and large gaps for similar and different datasets, respectively. Technically, R-divergence looks at a minimum hypothesis using mixed data, while L-divergence explores the entire hypothesis space. Consequently, R-divergence is model-oriented, accounting for their hypothesis space, loss function, target function, and optimization process.

Accordingly, there is no need to select hypothesis spaces and loss functions for R-divergence, as it estimates the discrepancy tailored to a specific learning task. Hence, it is reasonable to anticipate different results for varying learning tasks. Whether samples from two datasets are drawn from the same distribution is dependent on the particular learning model in use, and different models exhibit varying sensitivities to distribution discrepancies. Therefore, the distributions of two given datasets may be deemed identical for some models, yet significantly different for others.

\section{Theoretical Analysis}
\label{sec:ta}

Here, we present the convergence results of the proposed R-divergence method, i.e., the bound on the difference between the empirical estimator $\mathcal{\widehat{D}}_{\text{R}}(\widehat{p} \| \widehat{q})$ and the discrepancy $\mathcal{D}_{\text{R}}(p \| q)$. We define the L-divergence~\cite{US:19} $\mathcal{D}_{\textup{L}}(p \| q)$ as
\begin{equation}
\mathcal{D}_{\textup{L}}(p \| q) = \sup_{h \in \mathcal{H}}(\vert \epsilon_p(h) - \epsilon_q(h)\vert).
\label{eq:l}
\end{equation}
Further, we define the Rademacher complexity $\mathcal{R}_{\mathcal{H}}^{l}(\widehat{u})$~\cite{RC:02} as
\begin{equation}\label{eq:rc}
\mathcal{R}_{\mathcal{H}}^{l}(\widehat{u}) \equiv \frac{1}{2N} \mathbb{E}_{\boldsymbol{\sigma} \sim \{\pm 1\}^{2N}} \left[ \sup_{h \in \mathcal{H}} \sum_{x \in \widehat{u}} \sigma l(h(x), a(x)) \right].
\end{equation}
According to the generalization bounds in terms of L-divergence and Rademacher complexity, we can bound the difference between $\mathcal{\widehat{D}}_{\text{R}}(\widehat{p} \| \widehat{q})$ and $\mathcal{D}_{\text{R}}(p \| q)$.

\begin{theorem}
For any $x \in \mathcal{X}$, $h \in \mathcal{H}$ and $a \in \mathcal{A}$, assume we have $\vert l(h(x), a(x)) \vert \leq c$. Then, with probability at least $1 - \delta$, we can derive the following general bound,
\begin{equation*}\label{eq:geb}
\vert \mathcal{D}_{\text{R}}(p \| q) - \mathcal{\widehat{D}}_{\text{R}}(\widehat{p} \| \widehat{q}) \vert \leq 2 \mathcal{D}_{\textup{L}}(p \| q) + 2 \mathcal{R}_{\mathcal{H}}^{l}(\widehat{p}) + 2 \mathcal{R}_{\mathcal{H}}^{l}(\widehat{q}) + 12c \sqrt{\frac{\ln(8/\delta)}{N}}.
\end{equation*}
\label{t:geb}
\end{theorem}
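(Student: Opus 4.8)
The plan is to control two distinct sources of error separately: the \emph{approximation} error from replacing the population-optimal hypothesis $h^*_u$ by the empirical minimizer $\widehat{h}_{\widehat{u}}$, and the \emph{estimation} error from replacing population risks by empirical ones. To decouple them I would introduce the intermediate quantity $D' = \vert \epsilon_p(\widehat{h}_{\widehat{u}}) - \epsilon_q(\widehat{h}_{\widehat{u}})\vert$, which uses the empirical hypothesis but the true risks, and split by the triangle inequality:
\[
\vert \mathcal{D}_{\text{R}}(p\|q) - \widehat{\mathcal{D}}_{\text{R}}(\widehat{p}\|\widehat{q})\vert \le \vert \mathcal{D}_{\text{R}}(p\|q) - D'\vert + \vert D' - \widehat{\mathcal{D}}_{\text{R}}(\widehat{p}\|\widehat{q})\vert .
\]

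For the first term, observe that $\mathcal{D}_{\text{R}}(p\|q) = \vert \epsilon_p(h^*_u) - \epsilon_q(h^*_u)\vert$ and $D' = \vert \epsilon_p(\widehat{h}_{\widehat{u}}) - \epsilon_q(\widehat{h}_{\widehat{u}})\vert$ are each the absolute gap between the $p$- and $q$-risks of a \emph{single} hypothesis lying in $\mathcal{H}$ (namely $h^*_u$ and $\widehat{h}_{\widehat{u}}$). By the definition of L-divergence in \eqref{eq:l}, both therefore lie in $[0, \mathcal{D}_{\textup{L}}(p\|q)]$, so their difference is at most $\mathcal{D}_{\textup{L}}(p\|q) \le 2\mathcal{D}_{\textup{L}}(p\|q)$. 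This is where the L-divergence term enters, and it is the step that keeps the argument \emph{hypothesis-free}: I never need to relate $h^*_u$ and $\widehat{h}_{\widehat{u}}$ directly, only to note that both are admissible hypotheses.

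For the second term I would first peel off the outer absolute values using $\bigl\vert \vert a\vert - \vert b\vert\bigr\vert \le \vert a - b\vert$ and then regroup by distribution:
\[
\vert D' - \widehat{\mathcal{D}}_{\text{R}}(\widehat{p}\|\widehat{q})\vert \le \bigl\vert \epsilon_p(\widehat{h}_{\widehat{u}}) - \widehat{\epsilon}_{\widehat{p}}(\widehat{h}_{\widehat{u}})\bigr\vert + \bigl\vert \epsilon_q(\widehat{h}_{\widehat{u}}) - \widehat{\epsilon}_{\widehat{q}}(\widehat{h}_{\widehat{u}})\bigr\vert .
\]
Since $\widehat{h}_{\widehat{u}}$ is learned from $\widehat{u} = \widehat{p}\cup\widehat{q}$ and is thus data-dependent, neither summand is a fixed-hypothesis deviation; I would bound each uniformly, $\vert \epsilon_p(\widehat{h}_{\widehat{u}}) - \widehat{\epsilon}_{\widehat{p}}(\widehat{h}_{\widehat{u}})\vert \le \sup_{h\in\mathcal{H}}\vert \epsilon_p(h) - \widehat{\epsilon}_{\widehat{p}}(h)\vert$, and likewise for $q$. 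The two suprema are then controlled by the standard symmetrization bound, which produces $2\mathcal{R}_{\mathcal{H}}^{l}(\widehat{p})$ and $2\mathcal{R}_{\mathcal{H}}^{l}(\widehat{q})$, together with McDiarmid's inequality, using that $\vert l\vert\le c$ makes each per-sample bounded difference of order $c/N$.

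The main obstacle is precisely this last step: $\widehat{h}_{\widehat{u}}$ depends on the very samples against which its empirical risk is measured, so naive fixed-$h$ concentration is unavailable and a uniform Rademacher argument is forced. Carrying it out cleanly means chaining three standard estimates per distribution — symmetrization of the expected supremum to the expected Rademacher complexity, McDiarmid concentration of the supremum about its mean, and a further McDiarmid step replacing the expected Rademacher complexity by its empirical counterpart $\mathcal{R}_{\mathcal{H}}^{l}(\widehat{p})$ — and then taking a union bound over all such events for both $p$ and $q$. The $\ln(8/\delta)$ reflects this union bound (about eight high-probability events once the two-sided suprema are split), and collecting the resulting $c\sqrt{\ln(8/\delta)/N}$ deviations with their coefficients, after the crude simplification $\sqrt{1/(2N)}\le\sqrt{1/N}$, yields the stated $12c\sqrt{\ln(8/\delta)/N}$. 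Everything else is routine bookkeeping; note that this route in fact delivers $\mathcal{D}_{\textup{L}}(p\|q)$ rather than $2\mathcal{D}_{\textup{L}}(p\|q)$ in the approximation term, so the claimed inequality follows \emph{a fortiori}.
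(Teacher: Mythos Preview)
Your proposal is correct and follows essentially the same route as the paper: insert the intermediate population risks $\epsilon_p(\widehat{h}_{\widehat{u}}),\epsilon_q(\widehat{h}_{\widehat{u}})$, bound the ``hypothesis-swap'' part by the L-divergence (since both $h^*_u$ and $\widehat{h}_{\widehat{u}}$ lie in $\mathcal{H}$), and bound the two empirical-versus-population gaps by the uniform Rademacher concentration of Lemma~\ref{le:rc}. The only difference is organizational: the paper applies the reverse triangle inequality first and then splits into four terms, bounding $\vert\epsilon_p(h^*_u)-\epsilon_q(h^*_u)\vert$ and $\vert\epsilon_p(\widehat{h}_{\widehat{u}})-\epsilon_q(\widehat{h}_{\widehat{u}})\vert$ separately by $\mathcal{D}_{\textup{L}}$ each, whereas your interval argument (both quantities lie in $[0,\mathcal{D}_{\textup{L}}]$) yields a single $\mathcal{D}_{\textup{L}}$ for that piece --- a harmless sharpening you already flagged.
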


\begin{corollary}
Following the conditions of Theorem~\ref{t:geb}, the upper bound of $\sqrt{\textup{Var}\left[ \mathcal{\widehat{D}}_{\text{R}}(\widehat{p} \| \widehat{q}) \right]}$ is
\begin{equation*}
\begin{aligned} \frac{34c}{\sqrt{N}} + \sqrt{2\pi} \left( 2 \mathcal{D}_{\textup{L}}(p \| q) + 2 \mathcal{R}_{\mathcal{H}}^{l}(\widehat{p}) + 2 \mathcal{R}_{\mathcal{H}}^{l}(\widehat{q}) \right).
\end{aligned}
\end{equation*}
\end{corollary}

The detailed proof is given in Section~\ref{sec:pf}. Note that both the convergence and variance of $\mathcal{\widehat{D}}_{\text{R}}(\widehat{p} \| \widehat{q})$ depend on the L-divergence, Rademacher complexity, and the number of samples from each dataset. Recall that L-divergence is based on the whole hypothesis space, but R-divergence is based on an optimal hypothesis. The above results reveal the relations between these two measures: a smaller L-divergence between two probability distributions leads to a tighter bound on R-divergence. As the Rademacher complexity measures the hypothesis complexity, we specifically consider the hypothesis complexity of deep neural networks (DNNs). According to Talagrand's contraction lemma~\cite{FML:18} and the sample complexity of DNNs~\cite{SIC:18, NL:19}, we can obtain the following bound for DNNs.

\begin{proposition}
Based on the conditions of Theorem~\ref{t:geb}, we assume $\mathcal{H}$ is the class of real-valued networks of depth $D$ over the domain $\mathcal{X}$. Let the Frobenius norm of the weight matrices be at most $M_1,\ldots,M_D$, the activation function be 1-Lipschitz, positive-homogeneous and applied element-wise (such as the ReLU). Then, with probability at least $1 - \delta$, we have,
\begin{equation*}
\begin{aligned}
\vert \mathcal{D}_{\textup{R}} (p \| q) - \widehat{\mathcal{D}}_{\textup{R}}(\widehat{p} \| \widehat{q}) \vert \leq 2 \mathcal{D}_{\textup{L}}(p \| q) + \frac{4LB(\sqrt{2D \ln2} + 1)\prod_{i = 1}^D M_i}{\sqrt{N}}  + 12c \sqrt{\frac{\ln(8/\delta)}{N}}.
\end{aligned}
\end{equation*}
\label{t:gebs}
\end{proposition}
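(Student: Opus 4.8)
The plan is to start from the general bound in Theorem~\ref{t:geb} and replace only the two empirical Rademacher complexity terms $\mathcal{R}_{\mathcal{H}}^{l}(\widehat{p})$ and $\mathcal{R}_{\mathcal{H}}^{l}(\widehat{q})$ by an explicit expression for deep networks. Since the $L$-divergence term $2\mathcal{D}_{\textup{L}}(p \| q)$ and the concentration term $12c\sqrt{\ln(8/\delta)/N}$ carry over verbatim, the entire task reduces to upper-bounding the Rademacher complexity of the loss class on each individual $N$-sample dataset and substituting back.

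First I would peel the loss function off the hypothesis. Because $l(\cdot, a(x))$ is $L$-Lipschitz in its first argument for every fixed target $a(x)$, Talagrand's contraction lemma~\cite{FML:18} yields $\mathcal{R}_{\mathcal{H}}^{l}(\widehat{p}) \leq L\,\mathcal{R}_{\mathcal{H}}(\widehat{p})$, where $\mathcal{R}_{\mathcal{H}}(\widehat{p})$ is the empirical Rademacher complexity of the raw network outputs $\{h(x) : h \in \mathcal{H}\}$, and likewise for $\widehat{q}$. This step discharges the dependence on the loss and target functions, leaving only the intrinsic complexity of the network class.

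Next I would invoke the size-independent Rademacher bound for norm-constrained deep networks~\cite{SIC:18, NL:19}. Under the stated hypotheses, namely depth $D$, Frobenius norms bounded by $M_1,\ldots,M_D$, inputs satisfying $\Vert x\Vert \leq B$, and a $1$-Lipschitz positive-homogeneous elementwise activation such as the ReLU, this gives $\mathcal{R}_{\mathcal{H}}(\widehat{p}) \leq B(\sqrt{2D\ln 2}+1)\prod_{i=1}^D M_i / \sqrt{N}$ because $\widehat{p}$ contains $N$ samples, and the identical bound holds for $\widehat{q}$. Substituting both into Theorem~\ref{t:geb} and summing the two equal contributions produces $2\mathcal{R}_{\mathcal{H}}^{l}(\widehat{p}) + 2\mathcal{R}_{\mathcal{H}}^{l}(\widehat{q}) \leq 4LB(\sqrt{2D\ln 2}+1)\prod_{i=1}^D M_i/\sqrt{N}$, which is precisely the middle term of the claimed inequality.

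The main obstacle is not conceptual but a matter of consistent bookkeeping of normalizations. The complexity in \eqref{eq:rc} is written for the $2N$-point mixed set $\widehat{u}$ with a $1/(2N)$ prefactor, so when specialized to the $N$-point sets $\widehat{p}$ and $\widehat{q}$ the normalization becomes $1/N$, and the Golowich-type bound must be applied with $N$ (not $2N$) samples so that the $\sqrt{N}$ in the denominator is correct. Provided the Lipschitz constant $L$ is tracked through the contraction step without an extraneous factor of two and the input bound $B$ is used consistently, the constants collapse exactly to the stated form, with no further estimation required.
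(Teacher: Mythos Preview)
Your proposal is correct and follows essentially the same route as the paper: start from Theorem~\ref{t:geb}, strip off the $L$-Lipschitz loss via Talagrand's contraction lemma (Lemma~\ref{le:tc}), and then apply the Golowich--Rakhlin--Shamir bound (Lemma~\ref{le:rcn}) to each $N$-sample Rademacher term. Your attention to the $N$ versus $2N$ normalization is exactly the bookkeeping needed to recover the stated constant $4LB(\sqrt{2D\ln 2}+1)\prod_i M_i/\sqrt{N}$.
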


Accordingly, we can obtain a tighter bound if the L-divergence $D_{\textup{L}}(p \| q)$ meets a certain condition.
\begin{corollary}
Following the conditions of Proposition~\ref{t:gebs}, with probability at least $1 - \delta$, we have the following conditional bounds if $\mathcal{D}_{\textup{L}}(p \| q) \leq \left\vert  \epsilon_u (h^*_u) -  \epsilon_u (\widehat{h}_u) \right\vert $,
\begin{equation*}
\begin{aligned}
\vert \mathcal{D}_{\textup{R}} (p \| q) - \widehat{\mathcal{D}}_{\textup{R}}(\widehat{p} \| \widehat{q}) \vert \leq \frac{8LB(\sqrt{2D \ln2} + 1)\prod_{i = 1}^D M_i}{\sqrt{N}} + 22c \sqrt{\frac{\ln(16/\delta)}{N}}.
\end{aligned}
\end{equation*}
\label{co:bound}
\end{corollary}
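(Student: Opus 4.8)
The plan is to start from the bound in Proposition~\ref{t:gebs} and eliminate the $2\mathcal{D}_{\textup{L}}(p\|q)$ term by converting it, via the hypothesis, into a Rademacher-complexity term of exactly the same form as the one already present. By assumption $\mathcal{D}_{\textup{L}}(p\|q) \leq \left\vert \epsilon_u(h^*_u) - \epsilon_u(\widehat{h}_u)\right\vert$, and since $h^*_u$ minimizes the expected risk on $u$ we have $\epsilon_u(\widehat{h}_u) \geq \epsilon_u(h^*_u)$, so this absolute value is precisely the nonnegative excess risk $\epsilon_u(\widehat{h}_u) - \epsilon_u(h^*_u)$ of the empirical minimizer $\widehat{h}_u$ evaluated on the mixture $u$. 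The first substantive step is therefore to control this excess risk.

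To do so I would use the standard empirical-risk-minimization decomposition
\[
\epsilon_u(\widehat{h}_u) - \epsilon_u(h^*_u) = \big[\epsilon_u(\widehat{h}_u) - \widehat{\epsilon}_{\widehat{u}}(\widehat{h}_u)\big] + \big[\widehat{\epsilon}_{\widehat{u}}(\widehat{h}_u) - \widehat{\epsilon}_{\widehat{u}}(h^*_u)\big] + \big[\widehat{\epsilon}_{\widehat{u}}(h^*_u) - \epsilon_u(h^*_u)\big].
\]
The middle bracket is nonpositive because $\widehat{h}_u$ minimizes the empirical risk on $\widehat{u}$, so the excess risk is at most the two remaining deviation terms, which are together bounded by twice the uniform deviation $\sup_{h\in\mathcal{H}} \vert \epsilon_u(h) - \widehat{\epsilon}_{\widehat{u}}(h)\vert$. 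Applying the Rademacher generalization bound over $\widehat{u}$ (which holds $2N$ samples and a loss bounded by $c$) then shows that, with probability at least $1-\delta/2$, the excess risk is at most a constant multiple of $\mathcal{R}_{\mathcal{H}}^{l}(\widehat{u})$ plus a concentration term of order $c\sqrt{\ln(1/\delta)/N}$.

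Next I would substitute the DNN-specific bound already invoked for Proposition~\ref{t:gebs} (Talagrand's contraction lemma together with the Frobenius-norm sample-complexity estimate for depth-$D$ networks) to replace $\mathcal{R}_{\mathcal{H}}^{l}(\widehat{u})$ by $\tfrac{LB(\sqrt{2D\ln2}+1)\prod_{i=1}^D M_i}{\sqrt{N}}$, using $1/\sqrt{2N}\le 1/\sqrt{N}$. Doubling the excess-risk bound to account for the factor $2$ in $2\mathcal{D}_{\textup{L}}(p\|q)$ produces an additional $\tfrac{4LB(\sqrt{2D\ln2}+1)\prod_{i=1}^D M_i}{\sqrt{N}}$ term, which, added to the $\tfrac{4LB(\dots)}{\sqrt{N}}$ term already in Proposition~\ref{t:gebs}, yields the coefficient $8$ in the claimed bound. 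Finally, invoking Proposition~\ref{t:gebs} at confidence level $1-\delta/2$ (so $\ln(8/\delta)$ becomes $\ln(16/\delta)$) and taking a union bound with the excess-risk event gives the statement with probability at least $1-\delta$; collecting the two concentration contributions produces the $22c\sqrt{\ln(16/\delta)/N}$ term.

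The main obstacle is the constant bookkeeping in this last step rather than any conceptual difficulty: one must carefully track the sample count ($2N$ in $\widehat{u}$ versus $N$ in $\widehat{p}$ and $\widehat{q}$), the effective loss range, and the $\delta/2$ split across the two high-probability events so that the constants collapse to exactly $8$, $22$, and $\ln(16/\delta)$. Some care is also needed to apply the uniform-deviation bound in the direction (with the correct symmetrization over $\widehat{u}$) that is compatible with the one-sided ERM decomposition, since the absolute-value structure of $\mathcal{D}_{\textup{L}}$ and $\widehat{\mathcal{D}}_{\textup{R}}$ interacts with the signed inequalities used in the excess-risk guarantee.
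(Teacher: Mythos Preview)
Your proposal is correct and follows essentially the same route as the paper: replace $2\mathcal{D}_{\textup{L}}(p\|q)$ by twice the excess risk $\epsilon_u(\widehat{h}_u)-\epsilon_u(h^*_u)$ via the hypothesis, bound that excess risk by a Rademacher term plus a concentration term (the paper cites this as Lemma~\ref{le:rc2} rather than re-deriving the ERM decomposition), then substitute the DNN Rademacher estimate and union-bound with the event from Proposition~\ref{t:gebs}. The only cosmetic difference is that the paper re-enters the decomposition from the proof of Theorem~\ref{t:geb} instead of invoking the statement of Proposition~\ref{t:gebs} directly, but the resulting constants $8$, $22$, and $\ln(16/\delta)$ arise exactly as you outline.
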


Then, the empirical estimator $\widehat{\mathcal{D}}_{\textup{R}}(\widehat{p} \| \widehat{q})$ converge uniformly to the discrepancy $\mathcal{D}_{\textup{R}} (p \| q)$ as the number of samples $N$ increases. Then, based on the result of Proposition~\ref{t:gebs}, we can obtain a general upper bound of $\widehat{\mathcal{D}}_{\textup{R}}(\widehat{p} \| \widehat{q})$.

\begin{corollary}
Following the conditions of Proposition~\ref{t:gebs}, as $N \rightarrow \infty$, we have,
\begin{equation*}
\widehat{\mathcal{D}}_{\textup{R}}(\widehat{p} \| \widehat{q}) \leq 2 \mathcal{D}_{\textup{L}}(p \|q) + \mathcal{D}_{\textup{R}} (p \| q).
\end{equation*}
\label{co:gap}
\end{corollary}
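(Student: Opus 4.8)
The plan is to derive this directly from the high-probability bound already established in Proposition~\ref{t:gebs}, by observing that the only obstruction to the claimed inequality lives in the $O(1/\sqrt{N})$ terms, which vanish in the limit. Concretely, Proposition~\ref{t:gebs} asserts that, for any fixed $\delta \in (0,1)$, with probability at least $1 - \delta$,
\begin{equation*}
\vert \mathcal{D}_{\textup{R}} (p \| q) - \widehat{\mathcal{D}}_{\textup{R}}(\widehat{p} \| \widehat{q}) \vert \leq 2 \mathcal{D}_{\textup{L}}(p \| q) + \frac{4LB(\sqrt{2D \ln2} + 1)\prod_{i = 1}^D M_i}{\sqrt{N}}  + 12c \sqrt{\frac{\ln(8/\delta)}{N}}.
\end{equation*}
The first step is simply to note that both remaining terms on the right-hand side are of the form $\text{(constant)}/\sqrt{N}$, where the constants depend only on the fixed quantities $L$, $B$, $D$, the weight-norm bounds $M_1,\ldots,M_D$, the loss bound $c$, and $\delta$, but not on $N$. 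Hence, as $N \rightarrow \infty$, each of these terms converges to $0$.

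The second step is to pass to the limit. Taking $N \to \infty$ in the displayed inequality yields, with probability at least $1 - \delta$,
\begin{equation*}
\vert \mathcal{D}_{\textup{R}} (p \| q) - \widehat{\mathcal{D}}_{\textup{R}}(\widehat{p} \| \widehat{q}) \vert \leq 2 \mathcal{D}_{\textup{L}}(p \| q).
\end{equation*}
Since $\delta$ is arbitrary, this limiting inequality holds with probability tending to $1$. The final step is to drop the absolute value one-sidedly: from $\widehat{\mathcal{D}}_{\textup{R}}(\widehat{p} \| \widehat{q}) - \mathcal{D}_{\textup{R}} (p \| q) \leq \vert \mathcal{D}_{\textup{R}} (p \| q) - \widehat{\mathcal{D}}_{\textup{R}}(\widehat{p} \| \widehat{q}) \vert$ and the bound above, I would rearrange to obtain
\begin{equation*}
\widehat{\mathcal{D}}_{\textup{R}}(\widehat{p} \| \widehat{q}) \leq 2 \mathcal{D}_{\textup{L}}(p \|q) + \mathcal{D}_{\textup{R}} (p \| q),
\end{equation*}
which is exactly the claim.

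Since the heavy lifting has already been done in Theorem~\ref{t:geb} and Proposition~\ref{t:gebs}, there is no substantial analytic obstacle here; the result is essentially an asymptotic reading of the finite-sample bound. The only point requiring mild care is the bookkeeping on the probabilistic statement: the corollary is stated as a deterministic limit, so I would make explicit that the $1/\sqrt{N}$ terms vanish uniformly for each fixed $\delta$, and that letting $\delta \to 0$ alongside $N \to \infty$ (for instance, choosing $\delta_N \to 0$ slowly enough that $\sqrt{\ln(8/\delta_N)/N} \to 0$) upgrades the high-probability bound to one holding with probability approaching one. This justifies stating the conclusion without an explicit confidence parameter.
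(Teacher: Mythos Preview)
Your proposal is correct and follows essentially the same approach as the paper: invoke Proposition~\ref{t:gebs}, let the $O(1/\sqrt{N})$ terms vanish, and then apply the triangle inequality (what you call ``drop the absolute value one-sidedly''). If anything, your treatment of the probabilistic bookkeeping is more careful than the paper's, which simply states the limit and applies the triangle inequality without further comment on $\delta$.
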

The empirical estimator will converge to the sum of L-divergence $\mathcal{D}_{\textup{L}} (p || q)$ and R-divergence $\mathcal{D}_{\textup{R}} (p \| q)$ as the sample size grows, as indicated in Corollary~\ref{co:gap}. As highlighted by Theorem~\ref{t:geb}, a large L-divergence can influence the convergence of the empirical estimator. However, a large L-divergence is also useful for determining whether samples from two datasets are drawn from different distributions, as it signals significant differences between the two. Therefore, both L-divergence and R-divergence quantify the discrepancy between two given probability distributions within a learning model. Specifically, L-divergence measures this discrepancy across the entire hypothesis space, while R-divergence does so in terms of the optimal hypothesis of the mixture distribution. Hence, a large L-divergence implies that the empirical estimator captures a significant difference between the two distributions. Conversely, a small L-divergence suggests that the empirical estimator will quickly converge to a minor discrepancy as the sample size increases, as evidenced in Corollary~\ref{co:bound}. Thus, the empirical estimator also reflects a minimal distribution discrepancy when samples from both datasets are drawn from the same distribution.

\section{Experiments}
\label{sec:exp}

We evaluate the discrepancy between two probability distributions w.r.t. R-divergence (R-Div)\footnote{The source code is publicly available at: \url{https://github.com/Lawliet-zzl/R-div}.} for both unsupervised and supervised learning models. Furthermore, we illustrate how R-Div is applied to learn robust DNNs from data corrupted by noisy labels. We compare R-Div with seven state-of-the-art methods that estimate the discrepancy between two probability distributions. The details of the comparison methods are presented in Appendix~\ref{ap:cm}.

\subsection{Evaluation Metrics}
The discrepancy between two probability distributions for a learning model is estimated on their given datasets IID drawn from the distributions, respectively. We decide whether two sets of samples are drawn from the same distribution. A model estimates the discrepancy on their corresponding datasets $\widehat{p} = \{x_i\}_{i = 1}^N \sim p$ and $\widehat{q} = \{x_i'\}_{i = 1}^N \sim q$. $\widehat{\mathcal{D}}_{\textup{R}}(\widehat{p} \| \widehat{q})$ serves as an empirical estimator for the discrepancy $\mathcal{D}_{\textup{R}} (p \| q)$ to decide whether $p \neq q$ if its output value exceeds a predefined threshold.

For an algorithm that decides if $p \neq q$, the \textit{Type I error}~\cite{TE:05} occurs when it incorrectly decides $p \neq q$, and the corresponding probability of Type I error is named the \textit{significant level}. In contrast, the \textit{Type II error}~\cite{TE:05} occurs when it incorrectly decides $p = q$, and the corresponding probability of not making this error is called the \textit{test power}~\cite{TP:20}. In the experiments, the samples of two given datasets $\widehat{p}$ and $\widehat{q}$ are drawn from different distributions, i.e., the ground truth is $ p \neq q$. Both the significant level and the test power are sensitive to the two probability distributions. Accordingly, to measure the performance of a learning algorithm, we calculate its test power when a certain significant level $\alpha \in [0,1]$ can be guaranteed. A higher test power indicates better performance when $ p \neq q$.

The significant level can be guaranteed with a permutation test~\cite{PM:04}. For the given datasets $\widehat{p}$ and $\widehat{q}$, the permutation test uniformly randomly swaps samples between $\widehat{p}$ and $\widehat{q}$ to generate $Z$ dataset pairs $\{(\widehat{p}^z, \widehat{q}^z)\}_{z = 1}^Z$. For $\widehat{p}$ and $\widehat{q}$, R-Div outputs the empirical estimator $\widehat{\mathcal{D}}_{\textup{R}}(\widehat{p} \| \widehat{q})$. Note that if $p = q$, swapping samples between $\widehat{p}$ and $\widehat{q}$ cannot change the distribution, which indicates that the permutation test can guarantee a certain significant level. If the samples of $\widehat{p}$ and $\widehat{q}$ are drawn from different distributions, the test power is $1$ if the algorithm can output $p \neq q$ Otherwise, the test power is $0$. Specifically, to guarantee the $\alpha$ significant level, the algorithm output $p \neq q$ if $\widehat{\mathcal{D}}_{\textup{R}}(\widehat{p} \| \widehat{q})$ is in the top $\alpha$-quantile among $\mathcal{G} = \{ \mathcal{\widehat{D}}_{\textup{R}}( \widehat{p}^z \| \widehat{q}^z) \}_{z = 1}^Z$. We perform the test for $K$ times using different randomly-selected dataset pairs $(\widehat{p}, \widehat{q})$ and average the output values as the test power of the algorithm. If not specified, we set $\alpha = 0.05$, $Z = 100$ and $K = 100$. The procedure of calculating the average test power of R-Div is summarized in Algorithm~\ref{alg:tp} (see Appendix~\ref{ap:atp}).

\subsection{Unsupervised Learning Tasks}
\textbf{Benchmark Datasets:} Following the experimental setups as~\cite{MMDD:20} and ~\cite{HD:22}, we adopt four benchmark datasets, namely Blob, HDGM, HIGGS, and MNIST. We consider learning unsupervised models on these datasets, and the considered hypothesis spaces are for generative models. Specifically, MNIST adopts the variational autoencoders (VAE)~\cite{VAE:14}, while the other low dimensional data adopt kernel density estimation (KDE)~\cite{KDE:17}. For fair comparisons, we follow the same setup as~\cite{HD:22} for all methods. Because all the compared methods have hyper-parameters, we evenly split each dataset into a training set and a validation set to tune hyper-parameters and compute the final test power, respectively. The average test powers for a certain significant level on HDGM~\cite{MMDD:20}, Blob~\cite{MMDD:20}, MNIST~\cite{MNIST:98} and HIGGS~\cite{HIGGS:16} are presented in \figurename~\ref{fig:HDGM} , \figurename~\ref{fig:Blob} , \tablename~\ref{tab:mnist} (see Appendix~\ref{ap:bm}) and \tablename~\ref{tab:higgs}, respectively. Overall, our proposed R-Div achieves the best test power on all datasets. On HDGM, R-Div obtains the best performance for different dimensions and achieves the perfect test power with fewer samples. Specifically, the test power of R-Div decreases gracefully as the dimension increases on HDGM. On Blob with different significant levels, R-Div achieves the same test power as the second-best method with a lower variance. On MNIST, both H-Div and R-Div achieve the perfect test power on all the sample sizes. On HIGGS, R-Div achieves the perfect test power even on the dataset with the smallest scale.

The discrepancy as measured by R-Div is distinct for two different distributions, even with a small sample size. This advantage stems from mitigating overfitting of H-Div by learning the minimum hypothesis and evaluating its empirical risks across varying datasets. R-Div learns a minimum hypothesis based on the mixed data and yields consistent empirical risks for both datasets when samples are sourced from the same distribution. Conversely, if samples from the two datasets come from different distributions, the empirical risks differ. This is because the training dataset for the minimum hypothesis does not precisely match either of the individual datasets.

\begin{table}[]
\caption{The empirical risks on $p$ and $q$ for L-Div, H-Div, and R-Div on MNIST when $N = 200$.}
\label{tab:of}
\begin{center}
\begin{tabular}{ccccc}
\toprule
Methods & Hypothesis                                                                                                                                                            & $\epsilon_p (h_p)$ & $\epsilon_q (h_q)$ & $ \vert \epsilon_p (h_p) - \epsilon_q (h_q) \vert$ \\ \midrule
L-Div   & $h_p = h_q \in \mathcal{H}$                                                                                                                                                   & 713.0658         & 713.1922         & 0.1264                \\ \midrule
H-Div   & \begin{tabular}[c]{@{}c@{}}$h_p \in \arg \min_{h \in \mathcal{H}} \epsilon_{p}(h)$\\ $h_q \in \arg \min_{h \in \mathcal{H}} \epsilon_{q}(h)$ \end{tabular} & 111.7207         & 118.9020         & 7.1813                \\ \midrule
R-Div   & $h_p = h_q \in \arg \min_{h \in \mathcal{H}} \epsilon_{\frac{p + q}{2}}(h)$                                                                                                   & 144.0869         & 99.6166          & \textbf{44.4703}\\
\bottomrule
\end{tabular}
\end{center}
\end{table}

To better demonstrate the advantages of R-Div, we measure the amount of overfitting by calculating empirical risks on $p$ and $q$. The results presented in \tablename~\ref{tab:of} indicate that R-Div achieves a notably clearer discrepancy, attributable to the significant differences in empirical risks on $p$ and $q$. This is because R-Div mitigates overfitting by optimizing a minimum hypothesis on a mixed dataset and assessing its empirical risks across diverse datasets. Consequently, R-Div can yield a more pronounced discrepancy for two markedly different distributions.

\begin{table}[t] \scriptsize
\caption{The average test power $\pm$ standard error at the significant level $\alpha = 0.05$ on HIGGS. $N$ represents the number of samples in each given dataset, and boldface values represent the relatively better discrepancy estimation.}
\label{tab:higgs}
\begin{center}
\begin{tabular}{cccccccc}
\toprule
N       & 1000  & 2000  & 3000  & 5000  & 8000  & 10000 & Avg.  \\
\midrule
ME      & 0.120${\scriptsize \pm \text{0.007}}$ & 0.165${\scriptsize \pm \text{0.019}}$ & 0.197${\scriptsize \pm \text{0.012}}$ & 0.410${\scriptsize \pm \text{0.041}}$ & 0.691${\scriptsize \pm \text{0.691}}$ & 0.786${\scriptsize \pm \text{0.041}}$ & 0.395 \\
SCF     & 0.095${\scriptsize \pm \text{0.022}}$ & 0.130${\scriptsize \pm \text{0.026}}$ & 0.142${\scriptsize \pm \text{0.025}}$ & 0.261${\scriptsize \pm \text{0.044}}$ & 0.467${\scriptsize \pm \text{0.038}}$ & 0.603${\scriptsize \pm \text{0.066}}$ & 0.283 \\
C2STS-S & 0.082${\scriptsize \pm \text{0.015}}$ & 0.183${\scriptsize \pm \text{0.032}}$ & 0.257${\scriptsize \pm \text{0.049}}$ & 0.592${\scriptsize \pm \text{0.037}}$ & 0.892${\scriptsize \pm \text{0.029}}$ & 0.974${\scriptsize \pm \text{0.070}}$ & 0.497 \\
C2ST-L  & 0.097${\scriptsize \pm \text{0.014}}$ & 0.232${\scriptsize \pm \text{0.017}}$ & 0.399${\scriptsize \pm \text{0.058}}$ & 0.447${\scriptsize \pm \text{0.045}}$ & 0.878${\scriptsize \pm \text{0.020}}$ & 0.985${\scriptsize \pm \text{0.050}}$ & 0.506 \\
MMD-O   & 0.132${\scriptsize \pm \text{0.050}}$ & 0.291${\scriptsize \pm \text{0.012}}$ & 0.376${\scriptsize \pm \text{0.022}}$ & 0.659${\scriptsize \pm \text{0.018}}$ & 0.923${\scriptsize \pm \text{0.013}}$ & \textbf{1.000}${\scriptsize \pm \textbf{0.000}}$ & 0.564 \\
MMD-D   & 0.113${\scriptsize \pm \text{0.013}}$ & 0.304${\scriptsize \pm \text{0.035}}$ & 0.403${\scriptsize \pm \text{0.050}}$ & 0.699${\scriptsize \pm \text{0.047}}$ & 0.952${\scriptsize \pm \text{0.024}}$ & \textbf{1.000}${\scriptsize \pm \textbf{0.000}}$ & 0.579 \\
H-Div   & 0.240${\scriptsize \pm \text{0.020}}$ & 0.380${\scriptsize \pm \text{0.040}}$ & 0.685${\scriptsize \pm \text{0.015}}$ & 0.930${\scriptsize \pm \text{0.010}}$ & \textbf{1.000}${\scriptsize \pm \textbf{0.000}}$ & \textbf{1.000}${\scriptsize \pm \textbf{0.000}}$ & 0.847 \\
R-Div   & \textbf{1.000}${\scriptsize \pm \textbf{0.000}}$  & \textbf{1.000}${\scriptsize \pm \textbf{0.000}}$ & \textbf{1.000}${\scriptsize \pm \textbf{0.000}}$ & \textbf{1.000}${\scriptsize \pm \textbf{0.000}}$ & \textbf{1.000}${\scriptsize \pm \textbf{0.000}}$ & \textbf{1.000}${\scriptsize \pm \textbf{0.000}}$ & \textbf{1.000} \\
\bottomrule
\end{tabular}
\end{center}
\end{table}

\subsection{Supervised Learning Tasks}

% For supervised models, we compare the proposed R-Div approach with H-Div that can estimate discrepancy for data with ground truth labels.

\begin{table}[t] \scriptsize
\caption{The average test power $\pm$ standard error at the significant level $\alpha = 0.05$ on different domain combinations of PACS.
P, A, C and S represent photo, art painting, cartoon, and sketch domains, respectively. The distribution discrepancies for six combinations of two domains are estimated. For example, A + C represents the combination of art painting and cartoon domains.
Boldface values represent the relatively better discrepancy estimation.}
\label{tab:PACS}
\vskip 0.15in
\begin{center}
\begin{tabular}{cccccccc}
\toprule
Datasets & A+C   & A+P   & A+S   & C+P   & C+S   & P+S   & Ave.  \\
\midrule
H-Div    & 0.846${\scriptsize \pm \text{0.118}}$ & 0.668${\scriptsize \pm \text{0.064}}$ & 0.964${\scriptsize \pm \text{0.066}}$ & 0.448${\scriptsize \pm \text{0.135}}$ & 0.482${\scriptsize \pm \text{0.147}}$ & 0.876${\scriptsize \pm \text{0.090}}$ & 0.714 \\
R-Div     & \textbf{1.000}${\scriptsize \pm \textbf{0.000}}$ & \textbf{1.000}${\scriptsize \pm \textbf{0.000}}$ & \textbf{1.000}${\scriptsize \pm \textbf{0.000}}$ & \textbf{0.527}${\scriptsize \pm \textbf{0.097}}$ & \textbf{0.589}${\scriptsize \pm \textbf{0.160}}$ & \textbf{0.980}${\scriptsize \pm \textbf{0.040}}$ & \textbf{0.849} \\
\bottomrule
\end{tabular}
\end{center}
\end{table}

\textbf{Multi-domain Datasets:} We adopt the PACS dataset~\cite{PACS:17} consisting of samples from four different domains, including photo (P), art painting (A), cartoon (C), and sketch (S). Following~\cite{PACS:17} and~\cite{PL:19}, we use AlexNet~\cite{AN:14} as the backbone. Accordingly, samples from different domains can be treated as samples from different distributions. We consider the combinations of two domains and estimate their discrepancies. The results in \figurename~\ref{fig:PACS} (see Appendix~\ref{ap:pacs}) represent that the test power of R-Div significantly improves as the number of samples increases. In addition, R-Div obtains the best performance on different data scales. The results in \tablename~\ref{tab:PACS} show that R-Div achieves the superior test power across all the combinations of two domains. This is because the compared H-Div method suffers from the overfitting issue because it searches for a minimum hypothesis and evaluates its empirical risk on the same dataset. This causes empirical risks on different datasets to be minimal, even if they are significantly different. R-Div improves the performance of estimating the discrepancy by addressing this overfitting issue. Specifically, given two datasets, R-Div searches a minimum hypothesis on their mixed data and evaluates its empirical risks on each individual dataset. R-Div obtains stable empirical risks because the datasets used to search and evaluate the minimum hypothesis are not the same and partially overlap.

\textbf{Spurious Correlations Datasets:} We evaluate the probability distribution discrepancy between MNIST and Colored-MNIST (with spurious correlation)~\cite{IRM:19}. The adopted model for the two datasets is fully-connected and has two hidden layers of 128 ReLU units, and the parameters are learned using Adam~\cite{ADAM:15}. The results in \tablename~\ref{tab:SC} reveal that the proposed R-Div achieves the best performance, which indicates that R-Div can explore the spurious correlation features by learning a minimum hypothesis on the mixed datasets.

\begin{table}[t] \scriptsize
\caption{The average test power on MNIST and Colored-MNIST at the significant level $\alpha = 0.05$.}
\label{tab:SC}
\begin{center}
\begin{tabular}{cccccccc}
\toprule
ME    & SCF   & C2STS-S & C2ST-L & MMD-O & MMD-D & H-Div & R-Div          \\
\midrule
0.383 & 0.204 & 0.189   & 0.217  & 0.311 & 0.631 & 0.951 & \textbf{1.000} \\
\bottomrule
\end{tabular}
\end{center}
\end{table}

\textbf{Corrupted Datasets:} We assess the probability distribution discrepancy between a given dataset and its corrupted variants. Adhering to the construction methods outlined by Sun et al.\cite{DBLP:conf/icml/SunWLMEH20} and the framework of H-Div\cite{HD:22}, we conduct experiments using H-Div and R-Div. We examine partial corruption types across four corruption levels on both CIFAR10~\cite{CIFAR10:09} and ImageNet~\cite{DBLP:conf/cvpr/DengDSLL009}, employing ResNet18~\cite{RES:16} as the backbone. Assessing discrepancies in datasets with subtle corruptions is particularly challenging due to the minor variations in their corresponding probability distributions. Therefore, we employ large sample sizes, $N = 5000$ for CIFAR10 and $N = 1000$ for ImageNet, and observe that methods yield a test power of $0$ when the corruption level is mild. Consistently across different corruption types, R-Div demonstrates superior performance at higher corruption levels and outperforms H-Div across all types and levels of corruption. Moreover, our results on ImageNet substantiate that the proposed R-Div effectively estimates distribution discrepancies in large-resolution datasets.

\begin{table}[t]
\caption{The average test power on a dataset and its corrupted variants at the significant level $\alpha = 0.05$.}
\label{tab:cr}
\begin{center}
\begin{tabular}{cccccc}
\toprule
\multirow{2}{*}{Dataset}                                                       & \multirow{2}{*}{Corruption} & $0.5\%$        & $1\%$          & $2\%$          & $5\%$          \\ \cline{3-6}
                                                                               &                             & \multicolumn{4}{c}{H-Div/ R-Div}                              \\ \midrule
\multirow{4}{*}{\begin{tabular}[c]{@{}c@{}}CIFAR10\\ ($N=1000$)\end{tabular}}  & Gauss                       & 0.000 / 0.000 & 0.195 / \textbf{0.219} & 0.368 / \textbf{0.411} & 0.710 / \textbf{0.748} \\
                                                                               & Snow                        & 0.000 / 0.000 & 0.096 / \textbf{0.189} & 0.251 / \textbf{0.277} & 0.366 / \textbf{0.416} \\
                                                                               & Speckle                     & 0.000 / 0.000 & 0.275 / \textbf{0.340} & 0.458 / \textbf{0.513} & 0.685 / \textbf{0.771} \\
                                                                               & Impulse                     & 0.000 / 0.000 & 0.127 / \textbf{0.185} & 0.277 / \textbf{0.326} & 0.629 / \textbf{0.652} \\ \midrule
\multirow{4}{*}{\begin{tabular}[c]{@{}c@{}}ImageNet\\ ($N=5000$)\end{tabular}} & Gauss                       & 0.000 / 0.000 & 0.680 / \textbf{0.722} & \textbf{1.000} / \textbf{1.000} & \textbf{1.000} / \textbf{1.000} \\
                                                                               & Snow                        & 0.000 / 0.000 & 0.774 / \textbf{0.816} & 0.785 / \textbf{0.836} & \textbf{1.000} / \textbf{1.000} \\
                                                                               & Speckle                     & 0.000 / 0.000 & 0.586 / \textbf{0.624} & 0.762 / \textbf{0.804} & \textbf{1.000} / \textbf{1.000} \\
                                                                               & Impulse                     & 0.000 / 0.000 & 0.339 / \textbf{0.485} & 0.545 / \textbf{0.635} & \textbf{1.000} / \textbf{1.000} \\
\bottomrule
\end{tabular}
\end{center}
\end{table}

\textbf{Advanced Network Architectures:} For supervised learning tasks, R-Div selects the network architecture for the specific model. To verify the effect of R-div, we perform experiments on a wide range of architectures. We compare R-div with H-div because both of them are model-oriented and the other comparison methods have specific network architectures. Specifically, we consider the classification task on CIFAR10 and Corrupted-CIFAR10 and adopt four advanced network architectures, including ResNet18~\cite{RES:16}, SENet~\cite{SE:18}, ConvNeXt~\cite{CNX:22}, ViT~\cite{VIT:21}. The results are summarized in \tablename~\ref{tab:NA}. For large-scale datasets (N = 10000), both H-Div and R-Div achieve perfect test power with different network architectures because more samples are applied to estimate the probability distribution discrepancy precisely. However, R-Div has the better test power for more challenging scenarios where the datasets are small (N = 1000, 5000). This may be because R-Div addresses the overfitting issue of H-Div by optimizing the minimum hypothesis and evaluating its empirical risk on different datasets. Furthermore, for different network architectures, CNN-based networks achieve better performance than the transformer-based models (ViT). According to the analysis of Park and Kim~\cite{HVW:22} and Li et al.~\cite{ME:23}, this may be because multi-head attentions (MHA) are low-pass filters with a shape bias, while convolutions are high-pass filters with a texture bias. As a result, ViT more likely focuses on the invariant shapes between CIFAR10 and the corrupted-CIFAR10 to make the two datasets less discriminative.

\begin{table}[t]
\renewcommand{\arraystretch}{1.5}
\caption{The average test power on advanced network architectures.}
\label{tab:NA}
\begin{center}
\begin{tabular}{ccccc}
\toprule
\multirow{2}{*}{N} & ResNet18               & SENet         & ConvNeXt      & ViT           \\ \cline{2-5}
                   & \multicolumn{4}{c}{H-Div / R-Div}                                      \\ \midrule
1000               & 0.116 / \textbf{0.252}  & 0.201 / \textbf{0.296}  & 0.186 / \textbf{0.322} & 0.119 / \textbf{0.211}  \\
5000               & 0.622 / \textbf{0.748}  & 0.478 / \textbf{0.790}    & 0.778 / \textbf{0.844} & 0.492 / \textbf{0.651}\\
10000              & \textbf{1.000} / \textbf{1.000} & \textbf{1.000} / \textbf{1.000} & \textbf{1.000} / \textbf{1.000} & \textbf{1.000} / \textbf{1.000} \\ \bottomrule
\end{tabular}
\end{center}
%\vskip -0.2in
\end{table}

\subsection{Case Study: Learning with Noisy Labels}
R-Div is a useful tool to quantify the discrepancy between two probability distributions. We demonstrate how it can learn robust networks on data with noisy labels by distinguishing samples with noisy labels from clean samples in a batch. Furthermore, we explain how the discrepancy between clean and noisy samples affects the classification generalization ability~\cite{TSN:22}.

\subsubsection{Setup}
We adopt the benchmark dataset CIFAR10~\cite{CIFAR10:09}. As CIFAR10 contains clean samples, following the setup~\cite{BT:15}, we corrupt its training samples manually, i.e., flipping partial clean labels to noisy labels. Specifically, we consider symmetry flipping~\cite{SN:15} and pair flipping~\cite{CT:18}. For corrupting a label, symmetry flipping transfers it to another random label, and pair flipping transfers it to a specific label. Following the setup~\cite{MN:18}, we adopt $0.2$ noise rate, i.e., the labels of $20 \%$ samples will be flipped. The classification accuracy is evaluated on clean test samples.

We apply R-Div to train a robust DNN on a corrupted dataset. The training strategy separates clean and noisy samples by maximizing the discrepancy estimated by R-Div. The ground truth labels for noisy samples are unreliable. Therefore, these labels can be treated as complementary labels representing the classes that the samples do not belong to. Accordingly, the training strategy optimizes the losses of ground truth and complementary labels for clean and noisy samples, respectively.

We assume that clean and noisy samples satisfy different distributions, i.e., clean and noisy distributions. Specifically, we apply R-Div to detect clean and noisy samples from a batch. Accordingly, we separate the batch samples $\mathcal{B}$ into predicted clean samples $\mathcal{B}_c$ and predicted noisy samples $\mathcal{B}_n$ by maximizing R-Div. The batch and the whole dataset contain IID samples from an unknown mixture distribution about clean and noisy distributions. Both the minimum hypotheses on the batch and the whole dataset can be treated to approximate the optimal hypothesis of the mixture distribution. To prevent learning a minimum hypothesis for each given batch, we pretrain a network on the whole corrupted dataset rather than a batch and apply this pretrained network (minimum hypothesis) to separate clean and noisy samples in a batch.

Recall that R-Div calculates the empirical risk difference between two datasets under the same minimum hypothesis of their mixed dataset. Hence, to maximize the discrepancy of R-Div, we calculate the empirical risk of each sample in the batch by the pretrained network, sort the batch samples according to the empirical risks, and treat the samples with low and high empirical risks as predicted clean and noisy samples, respectively. Specifically, we separate noisy and clean samples according to the estimated noise rate $\gamma \in [0,1]$. For clean samples with trusted ground truth labels, we optimize a network by minimizing the traditional cross-entropy loss. In contrast, the labels for noisy samples are untrusted. However, their noisy labels indicate the classes they do not belong to. Accordingly, we can treat their untrusted ground truth labels as complementary labels and optimize a network by minimizing the negative traditional cross-entropy loss on noisy samples. Thus, we retrain a robust network by optimizing the objective function $\min_{h \in \mathcal{H}} \widehat{\epsilon}_{\mathcal{B}_c}(h) - \widehat{\epsilon}_{\mathcal{B}_n}(h)$ for each batch, where the loss function $l$ is the traditional cross-entropy loss. In this work, we adopt ResNet18~\cite{RES:16} as the backbone. In the pretraining and retraining processes, the learning rate~\cite{MP:18} starts at 0.1 and is divided by 10 after 100 and 150 epochs. The batch size is $128$, and the number of epochs is $200$. The hyper-parameter $\gamma$ varies from $0.1$ to $0.7$ with a $0.1$ step size.

\begin{figure}
\centering
\subfigure[Accuracy]{
    \begin{minipage}{0.3\linewidth}
    \centering
    \includegraphics[width=1\columnwidth]{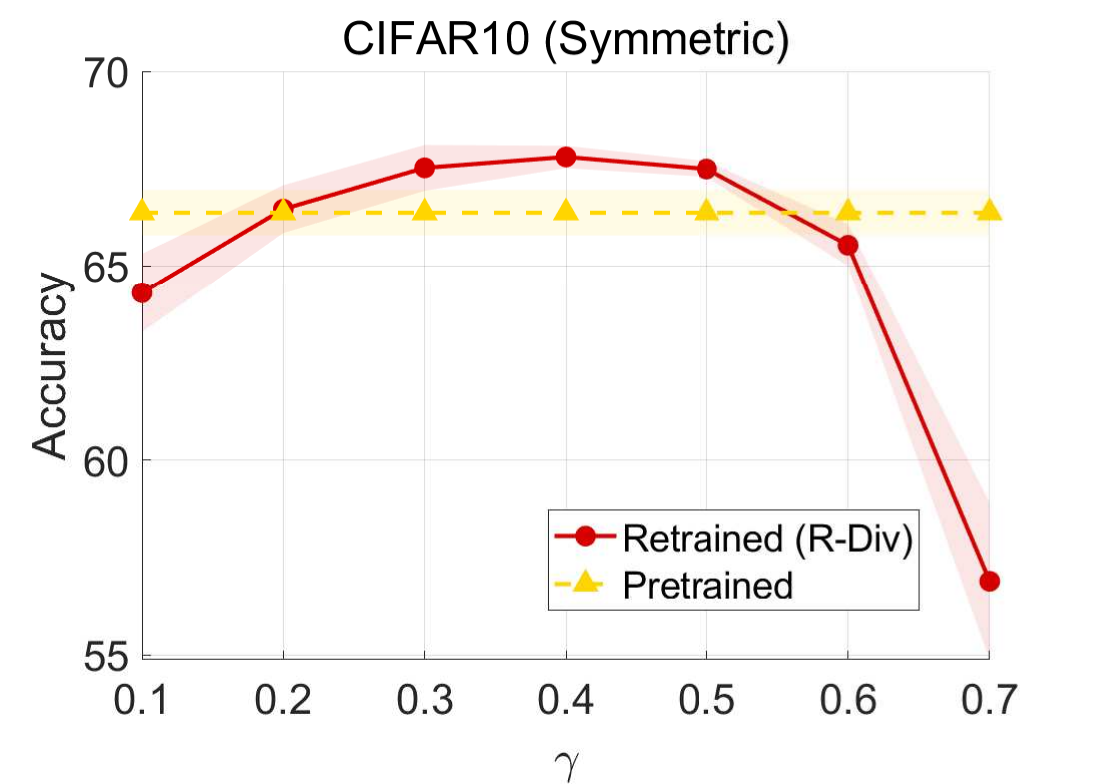}
    \label{fig:C10a}
    \vspace{0.05em}
    \end{minipage}
  }
\subfigure[Detection]{
    \begin{minipage}{0.3\linewidth}
    \centering
    \includegraphics[width=1\columnwidth]{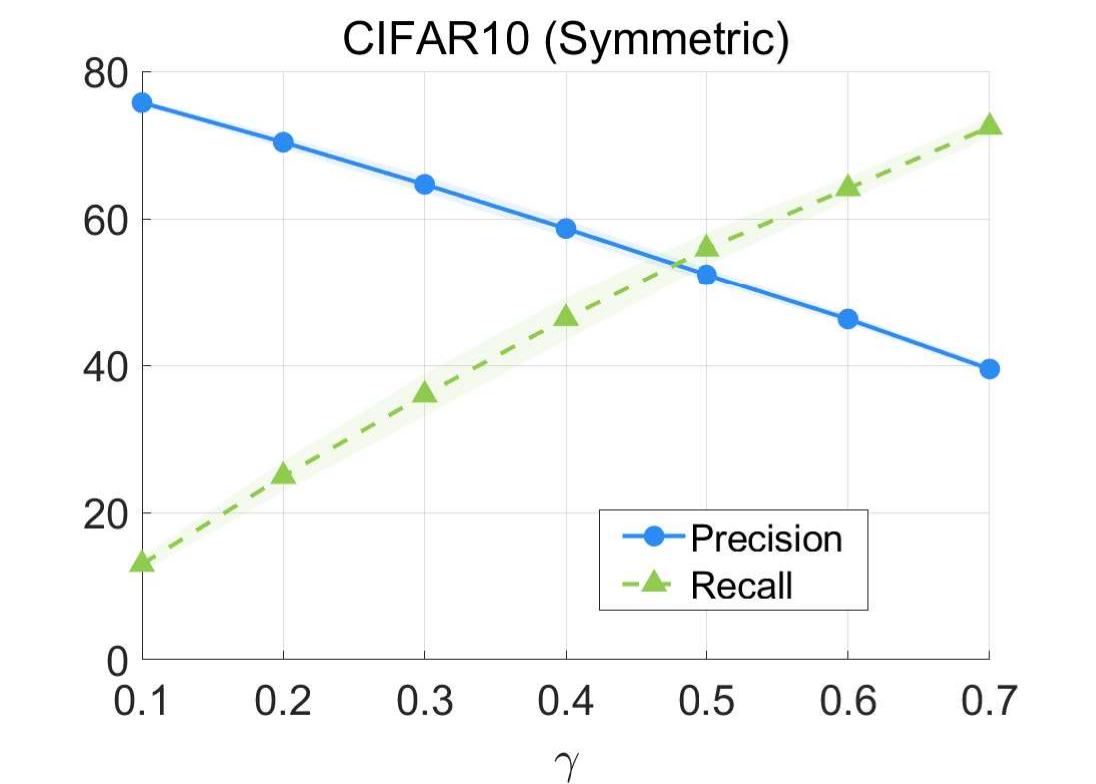}
    \label{fig:C10b}
    \vspace{0.05em}
    \end{minipage}
  }
\subfigure[Discrepancy]{
    \begin{minipage}{0.3\linewidth}
    \centering
    \includegraphics[width=1\columnwidth]{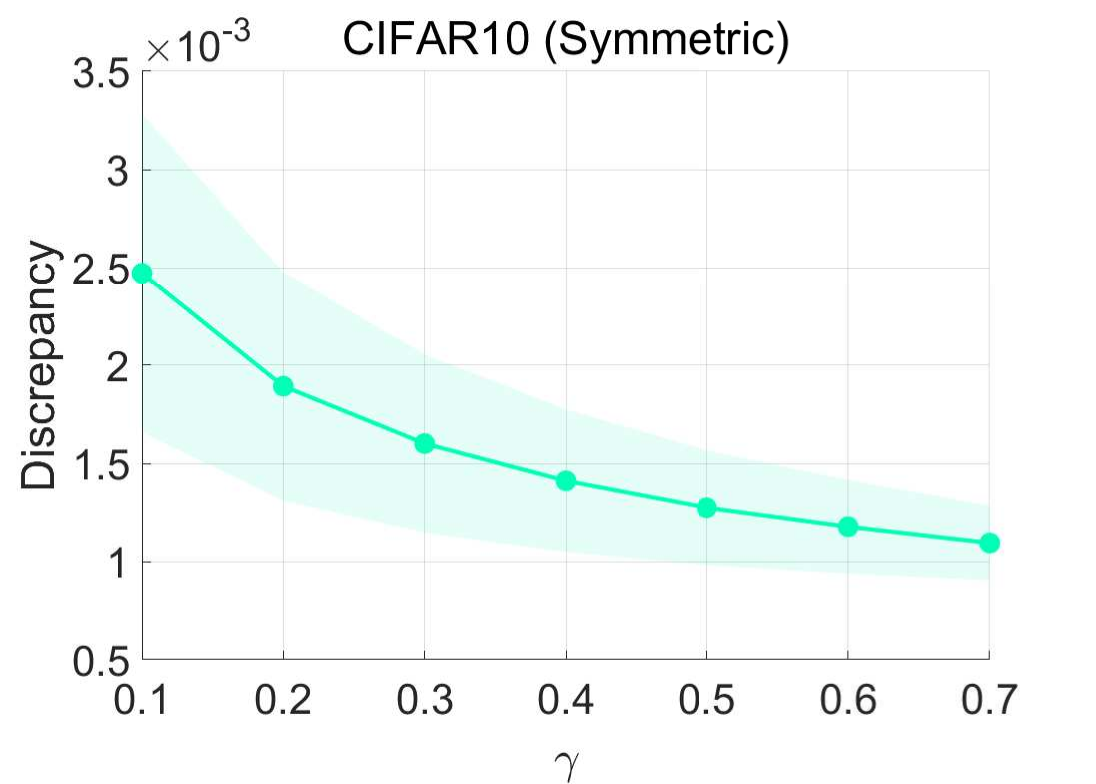}
    \label{fig:C10c}
    \vspace{0.05em}
    \end{minipage}
  }
\caption{Results on CIFAR10 with symmetry flipping. All values are averaged over five trials.
\textbf{Left:} Classification accuracy of pretrained and retrained networks.
\textbf{Middle:} Precision and recall rates of detecting clean and noisy samples.
\textbf{Right:} Discrepancy between predicted clean and noisy samples.
}
  \label{fig:C10S}
\end{figure}

\subsubsection{Experimental Results}
We evaluate the classification accuracy on clean test samples, the precision and recall rates of detecting clean and noisy samples, and the discrepancy between predicted clean and noisy samples. The results are reported in \figurename~\ref{fig:C10S} and \figurename~\ref{fig:C10P} (see Appendix~\ref{ap:c10}). Recall that the noise rate is $0.2$. \figurename~\ref{fig:C10a} and \figurename~\ref{fig:C10d} show that the network retrained with R-Div improves the classification performance over the pretrained network when the estimated noise rate $\gamma \in [0.2, 0.5]$. The retrained network achieves the best performance when the estimated noise rate is nearly twice as much as the real noise rate. To the best of our knowledge, how to best estimate the noise rate remains an open problem and severely limits the practical application of the existing algorithms. However, our method merely requires an approximate estimation which can be twice as much as the real noise rate.

Further, \figurename~\ref{fig:C10b} and \figurename~\ref{fig:C10e} show that there is a point of intersection between the precision and recall curves at $\gamma = 0.5$. The aforementioned results indicate that $\gamma \leq 0.5$ can lead to improved performance. Accordingly, we know that the detection precision is more essential than its recall rate. Specifically, a larger estimated noise rate ($\gamma \geq 0.5$) indicates that more clean samples are incorrectly recognized as noisy samples. In contrast, an extremely lower estimated noise rate which is lower than the real one indicates that more noisy samples are incorrectly recognized as clean samples. Both situations can mislead the network in learning to classify clean samples.

In addition, \figurename~\ref{fig:C10c} and \figurename~\ref{fig:C10f} show that a larger $\gamma$ leads to a smaller discrepancy. Because the majority of training samples are clean, most of the clean samples have small test losses. A larger $\gamma$ causes more clean samples to be incorrectly recognized as noisy samples, which decreases the test loss of the set of predicted noisy samples. Thus, the empirical estimator will decrease. Based on the results reported in \figurename~\ref{fig:C10a} and \figurename~\ref{fig:C10d}, the largest discrepancy cannot lead to improved classification performance because most of the noisy samples are incorrectly recognized as clean samples. In addition, we observe that the retrained network obtains improved classification performance when the estimated discrepancy approximates the mean value $1.5 \times 10^{-3}$, which can be used as a standard to select an estimated noise rate $\gamma$ for retraining a robust network.

\section{Conclusions}
\label{sec:con}
We tackle the crucial question of whether two probability distributions are identically distributed for a specific learning model using a novel and effective metric, R-divergence. Unlike existing divergence measures, R-divergence identifies a minimum hypothesis from a dataset that combines the two given datasets, and then uses this to compute the empirical risk difference between the individual datasets. Theoretical analysis confirms that R-divergence, serving as an empirical estimator, uniformly converges and is bounded by the discrepancies between the two distributions. Empirical evaluations reveal that R-divergence outperforms existing methods across both supervised and unsupervised learning paradigms. Its efficacy in handling learning with noisy labels further underscores its utility in training robust networks. If the samples from the two datasets are deemed identically distributed, a subsequent question to be investigated is whether these samples are also dependent.

\begin{ack}
This work was supported in part by the Australian Research Council Discovery under Grant DP190101079 and in part by the Future Fellowship under Grant FT190100734.
\end{ack}

\bibliography{ref}
\bibliographystyle{plain}
%%%%%%%%%%%%%%%%%%%%%%%%%%%%%%%%%%%%%%%%%%%%%%%%%%%%%%%%%%%%

\newpage
\appendix
\onecolumn

\section{Proofs} \label{sec:pf}
Section~\ref{sec:pf1} presents the lemmas used to prove the main results. Section~\ref{sec:pf2} presents the main results and their detailed proofs.

\subsection{Preliminaries} \label{sec:pf1}
\begin{lemma}[\cite{ML:14}, Theorem 26.5]
With probability of at least $1 - \delta$, for all $h \in \mathcal{H}$ ,
\begin{equation*}
\left\vert \epsilon_u (h) - \widehat{\epsilon}_{\widehat{u}}(h)  \right\vert \leq 2 \mathcal{R}_H^l (\widehat{u}) + 4c\sqrt{\frac{\ln(4/\delta)}{N}}.
\end{equation*}
\label{le:rc}
\end{lemma}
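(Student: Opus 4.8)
This is the standard data-dependent Rademacher generalization bound (the cited \cite{ML:14}, Theorem~26.5) specialized to the induced loss class $\mathcal{L} = \{x \mapsto l(h(x), a(x)) : h \in \mathcal{H}\}$ evaluated on the $2N$-point sample $\widehat{u}$. The plan is to reproduce its three-step proof: a bounded-differences concentration of the worst-case deviation, a symmetrization inequality that introduces a Rademacher average, and a second concentration step that replaces the \emph{expected} Rademacher complexity by the \emph{empirical} quantity $\mathcal{R}_{\mathcal{H}}^{l}(\widehat u)$ defined in~\eqref{eq:rc}.

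First I would set $\Phi(\widehat u) = \sup_{h \in \mathcal{H}} \bigl( \epsilon_u(h) - \widehat\epsilon_{\widehat u}(h) \bigr)$. Since $\widehat\epsilon_{\widehat u}(h)$ averages $2N$ loss terms each bounded by $c$ in absolute value, replacing a single point of $\widehat u$ changes $\widehat\epsilon_{\widehat u}(h)$, and therefore the supremum $\Phi$, by at most $c/N$; McDiarmid's inequality over the $2N$ coordinates (with $\sum_i c_i^2 = 2c^2/N$) then gives, with probability at least $1-\delta/2$, a bound of the shape $\Phi(\widehat u) \le \mathbb{E}[\Phi(\widehat u)] + c\sqrt{\ln(2/\delta)/N}$. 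Next I would control $\mathbb{E}[\Phi(\widehat u)]$ by symmetrization: introducing an independent ghost sample and a vector of Rademacher signs $\boldsymbol{\sigma} \sim \{\pm 1\}^{2N}$, the standard ghost-sample argument bounds the expected uniform deviation by twice the \emph{expected} Rademacher complexity of $\mathcal{L}$, i.e. $\mathbb{E}[\Phi(\widehat u)] \le 2\,\mathbb{E}[\mathcal{R}_{\mathcal{H}}^{l}(\widehat u)]$.

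It then remains to pass from the expected to the empirical Rademacher complexity. The map $\widehat u \mapsto \mathcal{R}_{\mathcal{H}}^{l}(\widehat u)$ again has bounded differences $c/N$ (changing one point shifts the inner supremum-of-sum average by at most $c/N$), so a second McDiarmid application gives $\mathbb{E}[\mathcal{R}_{\mathcal{H}}^{l}(\widehat u)] \le \mathcal{R}_{\mathcal{H}}^{l}(\widehat u) + c\sqrt{\ln(2/\delta)/N}$ with probability at least $1-\delta/2$. Collecting the symmetrization factor of two, the two confidence budgets $\delta/2$ via a union bound, and the two-sided statement (obtained by running the identical argument on $-\mathcal{L}$ and using that $\mathcal{R}_{\mathcal{H}}^{l}$ is invariant under flipping every sign $\sigma$) reproduces the stated constants $2\mathcal{R}_{\mathcal{H}}^{l}(\widehat u) + 4c\sqrt{\ln(4/\delta)/N}$. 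The main obstacle is the symmetrization step, which carries the genuine content of the result; by comparison the two McDiarmid applications, the sign-symmetry for the absolute value, and the tracking of the numerical constants are routine bookkeeping that follow the cited reference.
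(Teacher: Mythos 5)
Your proposal is correct and matches the paper's treatment: the paper offers no proof of this lemma, importing it directly from the cited reference (Shalev-Shwartz and Ben-David, Theorem 26.5), and your McDiarmid--symmetrization--McDiarmid argument with the stated constant bookkeeping is precisely the standard proof of that theorem, with the constants consistent after substituting the sample size $m = 2N$ (so that $4c\sqrt{2\ln(4/\delta)/m} = 4c\sqrt{\ln(4/\delta)/N}$). The one subtlety neither you nor the paper makes explicit is that $\widehat{u} = \widehat{p} \cup \widehat{q}$ is independent but not identically distributed (the first $N$ points drawn from $p$, the next $N$ from $q$); your argument survives unchanged because McDiarmid and the ghost-sample symmetrization are coordinate-wise and $\mathbb{E}\left[\widehat{\epsilon}_{\widehat{u}}(h)\right] = \tfrac{1}{2}\left(\epsilon_p(h) + \epsilon_q(h)\right) = \epsilon_u(h)$.
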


\begin{lemma}[\cite{ML:14}, Theorem 26.5]
With probability of at least $1 - \delta$, for all $h \in \mathcal{H}$,
\begin{equation*}
\epsilon_u (\widehat{h}_u) - \epsilon_u (h^*_u)  \leq 2 \mathcal{R}_H^l(\widehat{u}) + 5c\sqrt{\frac{ \ln(8/\delta)}{N}}.
\end{equation*}
\label{le:rc2}
\end{lemma}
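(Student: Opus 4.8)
The plan is to use the standard excess-risk decomposition for empirical risk minimization, exploiting the optimality of $\widehat{h}_u$ on the mixed sample together with the two-sided uniform bound already established in Lemma~\ref{le:rc}. First I would write the excess risk as a telescoping sum,
\begin{equation*}
\epsilon_u(\widehat{h}_u) - \epsilon_u(h^*_u) = \bigl[\epsilon_u(\widehat{h}_u) - \widehat{\epsilon}_{\widehat{u}}(\widehat{h}_u)\bigr] + \bigl[\widehat{\epsilon}_{\widehat{u}}(\widehat{h}_u) - \widehat{\epsilon}_{\widehat{u}}(h^*_u)\bigr] + \bigl[\widehat{\epsilon}_{\widehat{u}}(h^*_u) - \epsilon_u(h^*_u)\bigr],
\end{equation*}
and observe that the middle bracket is nonpositive, since $\widehat{h}_u$ minimizes the empirical risk $\widehat{\epsilon}_{\widehat{u}}$ over $\mathcal{H}$ by definition, so $\widehat{\epsilon}_{\widehat{u}}(\widehat{h}_u) \leq \widehat{\epsilon}_{\widehat{u}}(h^*_u)$. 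Dropping it leaves only the two deviation terms to control.

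For the first bracket, the empirical minimizer $\widehat{h}_u$ depends on the sample, so a pointwise concentration inequality does not apply; instead I would invoke Lemma~\ref{le:rc} with confidence parameter $\delta/2$, which bounds $\vert \epsilon_u(h) - \widehat{\epsilon}_{\widehat{u}}(h) \vert$ simultaneously for \emph{all} $h \in \mathcal{H}$, and in particular at $h = \widehat{h}_u$, by $2\mathcal{R}_H^l(\widehat{u}) + 4c\sqrt{\ln(8/\delta)/N}$ with probability at least $1 - \delta/2$. For the second bracket the situation is cheaper: $h^*_u$ is a fixed, data-independent hypothesis, so $\widehat{\epsilon}_{\widehat{u}}(h^*_u)$ is an average of $2N$ i.i.d.\ terms each lying in $[-c,c]$, and a single application of Hoeffding's inequality gives $\widehat{\epsilon}_{\widehat{u}}(h^*_u) - \epsilon_u(h^*_u) \leq c\sqrt{\ln(2/\delta)/N}$ with probability at least $1 - \delta/2$, incurring no Rademacher term.

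Finally I would take a union bound over these two events, valid with probability at least $1 - \delta$, and sum the two bounds:
\begin{equation*}
\epsilon_u(\widehat{h}_u) - \epsilon_u(h^*_u) \leq 2\mathcal{R}_H^l(\widehat{u}) + 4c\sqrt{\frac{\ln(8/\delta)}{N}} + c\sqrt{\frac{\ln(2/\delta)}{N}}.
\end{equation*}
Since $\ln(2/\delta) \leq \ln(8/\delta)$, the last two terms collapse to $5c\sqrt{\ln(8/\delta)/N}$, yielding the claim. The step requiring the most care, and the conceptual crux, is the asymmetric treatment of the two hypotheses: the data-dependent $\widehat{h}_u$ forces the uniform (Rademacher) bound and hence the logarithmic cost $\ln(8/\delta)$, whereas the fixed $h^*_u$ admits the sharper pointwise Hoeffding bound; splitting the confidence budget as $\delta/2 + \delta/2$ and matching the logarithmic constants is precisely what produces the clean factor of $5c$ rather than a doubled Rademacher penalty.
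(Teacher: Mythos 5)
Your proof is correct and is essentially the proof behind the paper's statement: the paper gives no argument of its own but quotes the result from Shalev-Shwartz and Ben-David (Theorem 26.5), whose proof is exactly your decomposition --- drop the nonpositive middle term by ERM optimality of $\widehat{h}_u$, apply the uniform Rademacher bound (Lemma~\ref{le:rc} at confidence $\delta/2$, turning $\ln(4/\delta)$ into $\ln(8/\delta)$) to the data-dependent bracket, use pointwise Hoeffding for the fixed $h^*_u$, and take a union bound. Your constants also check out, including the Hoeffding term $c\sqrt{\ln(2/\delta)/N}$ over the $2N$ mixed samples and the final collapse via $\ln(2/\delta)\leq\ln(8/\delta)$ to $5c\sqrt{\ln(8/\delta)/N}$.
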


\begin{lemma}[\cite{FML:18}, Talagrand's contraction lemma]
For any L-Lipschitz loss function $l(\cdot,\cdot)$, we obtain,
\begin{equation*}
\mathcal{R}(l \circ \mathcal{H} \circ \widehat{u}) \leq L \mathcal{R}(\mathcal{H} \circ \widehat{u}),
\end{equation*}
where $\mathcal{R}_{\mathcal{H}}^{l}(\widehat{u}) = \frac{1}{2N} \mathbb{E}_{\boldsymbol{\sigma} \sim \{\pm 1\}^{2N}} \left[ \sup_{h \in \mathcal{H}} \sum_{x \in \widehat{u}} \sigma_i h(x_i) \right].$
\label{le:tc}
\end{lemma}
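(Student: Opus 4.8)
The plan is to establish the contraction inequality by the classical ``peeling'' argument, removing the Lipschitz map $l$ one Rademacher coordinate at a time while replacing it by the linear scaling $L$. Throughout I fix the sample $\widehat{u} = \{x_1,\dots,x_{2N}\}$ and write $\phi_i(z) = l(z, a(x_i))$, which is $L$-Lipschitz in $z$ by hypothesis. Since the common prefactor $\frac{1}{2N}$ cancels on both sides, it suffices to prove
\begin{equation*}
\mathbb{E}_{\boldsymbol{\sigma}} \sup_{h \in \mathcal{H}} \sum_{i=1}^{2N} \sigma_i \phi_i(h(x_i)) \le L\, \mathbb{E}_{\boldsymbol{\sigma}} \sup_{h \in \mathcal{H}} \sum_{i=1}^{2N} \sigma_i h(x_i).
\end{equation*}
I would reduce this to a single-coordinate statement: conditioning on $\sigma_1,\dots,\sigma_{2N-1}$ and writing $g(h) = \sum_{i<2N}\sigma_i \phi_i(h(x_i))$, it is enough to show that $\mathbb{E}_{\sigma_{2N}}\sup_h[g(h) + \sigma_{2N}\phi_{2N}(h(x_{2N}))] \le \mathbb{E}_{\sigma_{2N}}\sup_h[g(h) + L\sigma_{2N}h(x_{2N})]$, since the same replacement can then be applied to each of the remaining coordinates in turn.

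The heart of the argument is this single-coordinate step. Averaging over $\sigma_{2N}\in\{\pm1\}$ and merging the two resulting suprema into a supremum over an independent pair $(h,h')$ gives
\begin{equation*}
\mathbb{E}_{\sigma_{2N}}\sup_h\bigl[g(h) + \sigma_{2N}\phi_{2N}(h(x_{2N}))\bigr] = \tfrac{1}{2}\sup_{h,h'}\bigl[g(h) + g(h') + \phi_{2N}(h(x_{2N})) - \phi_{2N}(h'(x_{2N}))\bigr].
\end{equation*}
Applying the Lipschitz bound $\phi_{2N}(z)-\phi_{2N}(z')\le L|z-z'|$ and then using that $g(h)+g(h')$ is invariant under the swap $h\leftrightarrow h'$, I would drop the absolute value to obtain the majorant $\tfrac{1}{2}\sup_{h,h'}[g(h)+g(h')+L(h(x_{2N})-h'(x_{2N}))]$, which factors back as $\mathbb{E}_{\sigma_{2N}}\sup_h[g(h)+L\sigma_{2N}h(x_{2N})]$. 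Iterating this replacement across all $2N$ coordinates and reinstating the $\frac{1}{2N}$ factor yields $\mathcal{R}(l\circ\mathcal{H}\circ\widehat{u}) \le L\,\mathcal{R}(\mathcal{H}\circ\widehat{u})$, as claimed.

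I expect the main obstacle to be the symmetrization step that removes the absolute value. It is valid precisely because $g(h)+g(h')$ is symmetric in the pair while $|h(x_{2N})-h'(x_{2N})| = \max\{h(x_{2N})-h'(x_{2N}),\,h'(x_{2N})-h(x_{2N})\}$, so relabeling the two copies realizes either sign; hence the joint supremum over $(h,h')$ is unchanged when the absolute value is replaced by the signed difference. A minor technical caveat is that when the supremum over $\mathcal{H}$ is not attained, the pairing and the Lipschitz estimate should be carried out along supremizing sequences, which does not affect the final inequality. Note also that no normalization such as $\phi_{2N}(0)=0$ is needed, since the symmetric Rademacher structure is what drives the cancellation.
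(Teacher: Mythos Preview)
The paper does not supply its own proof of this lemma: it is stated in the preliminaries (Section~\ref{sec:pf1}) as a known result cited from~\cite{FML:18}, and then merely invoked in the proof of Proposition~\ref{t:gebs}. So there is no ``paper's proof'' to compare against.

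That said, your argument is correct and is precisely the standard proof of Talagrand's contraction lemma as presented, e.g., in the cited reference: peel one Rademacher coordinate at a time, split the expectation over $\sigma_{2N}$ into a supremum over independent pairs $(h,h')$, apply the Lipschitz bound to the difference $\phi_{2N}(h(x_{2N}))-\phi_{2N}(h'(x_{2N}))$, and then exploit the symmetry of $g(h)+g(h')$ under the swap $h\leftrightarrow h'$ to drop the absolute value. Your handling of the symmetrization step is exactly right, as is the remark that no normalization $\phi_i(0)=0$ is needed in this version because the Rademacher signs provide the cancellation. The iteration across coordinates is also fine: at each stage the ``remainder'' $g(h)$ (now containing some linear terms $L\sigma_j h(x_j)$ from previous steps) is still symmetric in the pair $(h,h')$, which is all the argument uses.
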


\begin{lemma}[\cite{SIC:18}, Theorem 1]
Let $\mathcal{H}$ be the class of real-valued networks of depth $D$ over the domain $\mathcal{X}$. Assume the Frobenius norm of the weight matrices are at most $M_1,\ldots,M_d$. Let the activation function be 1-Lipschitz, positive-homogeneous, and applied element-wise (such as the ReLU). Then,
\begin{equation*}
\mathbb{E}_{\sigma \in \{\pm 1\}^N} \left[ \sup_{h \in \mathcal{H}} \sum_{i = 1}^N \sigma_i h(x_i) \right]\leq \sqrt{N} B(\sqrt{2D \ln2} + 1) \prod_{i = 1}^D M_i.
\end{equation*}
\label{le:rcn}
\end{lemma}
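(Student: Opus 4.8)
The plan is to obtain Proposition~\ref{t:gebs} as a direct specialization of Theorem~\ref{t:geb} to the deep-network hypothesis class. Theorem~\ref{t:geb} already supplies, with probability at least $1-\delta$, the generic bound
\[
\vert \mathcal{D}_{\textup{R}}(p \| q) - \widehat{\mathcal{D}}_{\textup{R}}(\widehat{p} \| \widehat{q}) \vert \leq 2 \mathcal{D}_{\textup{L}}(p \| q) + 2 \mathcal{R}_{\mathcal{H}}^{l}(\widehat{p}) + 2 \mathcal{R}_{\mathcal{H}}^{l}(\widehat{q}) + 12c \sqrt{\frac{\ln(8/\delta)}{N}},
\]
so the $2\mathcal{D}_{\textup{L}}(p\|q)$ term and the $12c\sqrt{\ln(8/\delta)/N}$ concentration term carry over verbatim and require no new argument. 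All that remains is to replace the two data-dependent Rademacher terms $\mathcal{R}_{\mathcal{H}}^{l}(\widehat{p})$ and $\mathcal{R}_{\mathcal{H}}^{l}(\widehat{q})$ by an explicit architecture-dependent quantity, which is where the DNN structure enters.

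First I would peel off the loss. Since $l$ is $L$-Lipschitz in its first argument, for each fixed $x$ the map $z \mapsto l(z, a(x))$ is $L$-Lipschitz, so Talagrand's contraction lemma (Lemma~\ref{le:tc}) yields $\mathcal{R}_{\mathcal{H}}^{l}(\widehat{p}) \leq L\,\mathcal{R}(\mathcal{H} \circ \widehat{p})$, where $\mathcal{R}(\mathcal{H}\circ\widehat{p})$ is the loss-free network Rademacher complexity over the $N$ points of $\widehat{p}$. Next I would invoke the sample-complexity bound for deep networks (Lemma~\ref{le:rcn}): for a depth-$D$ network with Frobenius weight norms at most $M_1,\dots,M_D$, $1$-Lipschitz positive-homogeneous activations, and inputs bounded by $B$, the unnormalized Rademacher sum is at most $\sqrt{N}\,B(\sqrt{2D\ln 2}+1)\prod_{i=1}^D M_i$. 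Dividing by $N$ to match the normalization of $\mathcal{R}(\mathcal{H}\circ\widehat{p})$ gives $\mathcal{R}_{\mathcal{H}}^{l}(\widehat{p}) \leq \frac{LB(\sqrt{2D\ln 2}+1)\prod_{i=1}^D M_i}{\sqrt{N}}$, and the identical estimate holds for $\widehat{q}$. Substituting, $2\mathcal{R}_{\mathcal{H}}^{l}(\widehat{p}) + 2\mathcal{R}_{\mathcal{H}}^{l}(\widehat{q}) \leq \frac{4LB(\sqrt{2D\ln 2}+1)\prod_{i=1}^D M_i}{\sqrt{N}}$, which plugged into Theorem~\ref{t:geb} produces exactly the claimed bound.

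The computation is routine; the one point demanding care is bookkeeping of the sample-size normalization. Definition~(\ref{eq:rc}) normalizes $\mathcal{R}_{\mathcal{H}}^{l}(\widehat{u})$ by $1/(2N)$ because $\widehat{u}$ has $2N$ points, whereas $\widehat{p}$ and $\widehat{q}$ each carry only $N$ points and are normalized by $1/N$; it is this $1/N$ against the $\sqrt{N}$ of Lemma~\ref{le:rcn} that produces the $1/\sqrt{N}$ rate, and one must confirm that the factor $L$ from contraction together with the two factors of $2$ in front of the Rademacher terms combine into the stated constant $4LB$. A secondary caveat worth flagging is that Lemma~\ref{le:rcn} is stated for real-valued networks, so for vector-valued outputs (e.g.\ multi-class logits) one would either appeal to a vector-contraction form of Lemma~\ref{le:tc} or argue coordinatewise; in the single-output setting assumed here the two lemmas chain together immediately, so I expect no genuine obstacle beyond this normalization accounting.
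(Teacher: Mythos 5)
Your proposal does not prove the statement at issue. The statement is Lemma~\ref{le:rcn} itself --- the bound $\mathbb{E}_{\sigma \in \{\pm 1\}^N} \left[ \sup_{h \in \mathcal{H}} \sum_{i=1}^N \sigma_i h(x_i) \right] \leq \sqrt{N} B(\sqrt{2D\ln 2} + 1)\prod_{i=1}^D M_i$ for depth-$D$, Frobenius-norm-bounded networks --- which the paper imports verbatim from \cite{SIC:18} (Theorem 1 there) and does not prove; its ``proof'' in the paper is the citation. What you actually prove is Proposition~\ref{t:gebs}: you start from Theorem~\ref{t:geb}, peel the loss with Talagrand's contraction (Lemma~\ref{le:tc}), and then \emph{invoke} Lemma~\ref{le:rcn} as a known fact to replace the Rademacher terms. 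Relative to the target statement this is circular: the lemma enters your argument as a hypothesis, not a conclusion, so nothing in the proposal establishes it. (As a derivation of Proposition~\ref{t:gebs} your chaining is fine and matches the paper's own one-line proof of that proposition, including the normalization bookkeeping you flag; but that is a different statement.)

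If you did want to prove the lemma itself, the difficulty is not normalization accounting. The naive route --- peeling the network one layer at a time and applying contraction to the activation at each layer --- loses a multiplicative factor of $2$ per peel and yields a bound scaling like $2^D \prod_{i=1}^D M_i$, which is exponentially worse in depth. The entire content of \cite{SIC:18} is avoiding this: one bounds $\log \mathbb{E}\left[ \exp\left( \lambda \sup_{h} \sum_i \sigma_i h(x_i) \right) \right]$ and peels layers \emph{inside} the exponential moment, using positive-homogeneity to pull out each Frobenius norm $M_j$ and the $1$-Lipschitz activation so that each peel contributes a factor of $2$ inside the logarithm rather than multiplicatively; optimizing over $\lambda$ then converts the would-be $2^D$ into the additive $\sqrt{2D\ln 2}$ term, with the base case giving the $\sqrt{N}B$ factor. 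Your closing remark that there is ``no genuine obstacle beyond normalization accounting'' is therefore incorrect for this statement --- the depth dependence $\sqrt{2D\ln 2}$ is precisely the nontrivial part, and no step of your proposal engages with it.
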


\subsection{Main Results} \label{sec:pf2}
\setcounter{theorem}{0}
\renewcommand{\thetheorem}{4.\arabic{theorem}}

\begin{theorem}
For any $x \in \mathcal{X}$, $h \in \mathcal{H}$ and $a \in \mathcal{A}$, assume we have $\vert l(h(x), a(x)) \vert \leq c$. Then, with probability at least $1 - \delta$, we can derive the following general bound,
\begin{equation*}
\begin{aligned}
    \vert \mathcal{D}_{\text{R}}(p \| q) - \mathcal{\widehat{D}}_{\text{R}}(\widehat{p} \| \widehat{q}) \vert \leq  2 \mathcal{D}_{\textup{L}}(p \| q) + 2 \mathcal{R}_{\mathcal{H}}^{l}(\widehat{p}) + 2 \mathcal{R}_{\mathcal{H}}^{l}(\widehat{q}) + 12c \sqrt{\frac{\ln(8/\delta)}{N}}.
\end{aligned}
\end{equation*}
\end{theorem}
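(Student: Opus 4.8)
The plan is to control the difference of the two absolute-value expressions $\mathcal{D}_{\text{R}}(p \| q) = \vert \epsilon_p(h^*_u) - \epsilon_q(h^*_u) \vert$ and $\widehat{\mathcal{D}}_{\text{R}}(\widehat{p} \| \widehat{q}) = \vert \widehat{\epsilon}_{\widehat{p}}(\widehat{h}_{\widehat{u}}) - \widehat{\epsilon}_{\widehat{q}}(\widehat{h}_{\widehat{u}}) \vert$ by inserting the intermediate ``population discrepancy evaluated at the empirical minimizer'', namely $\vert \epsilon_p(\widehat{h}_{\widehat{u}}) - \epsilon_q(\widehat{h}_{\widehat{u}}) \vert$. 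A single triangle inequality then splits the target into a purely distributional term (comparing the discrepancies of $h^*_u$ and $\widehat{h}_{\widehat{u}}$) and a sampling term (comparing population and empirical risks of the fixed hypothesis $\widehat{h}_{\widehat{u}}$):
\begin{equation*}
\vert \mathcal{D}_{\text{R}}(p \| q) - \widehat{\mathcal{D}}_{\text{R}}(\widehat{p} \| \widehat{q}) \vert \leq \underbrace{\left\vert \mathcal{D}_{\text{R}}(p \| q) - \vert \epsilon_p(\widehat{h}_{\widehat{u}}) - \epsilon_q(\widehat{h}_{\widehat{u}}) \vert \right\vert}_{(\text{A})} + \underbrace{\left\vert \vert \epsilon_p(\widehat{h}_{\widehat{u}}) - \epsilon_q(\widehat{h}_{\widehat{u}}) \vert - \widehat{\mathcal{D}}_{\text{R}}(\widehat{p} \| \widehat{q}) \right\vert}_{(\text{B})}.
\end{equation*}

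For term $(\text{A})$ I would use the elementary observation that for every hypothesis $h \in \mathcal{H}$ one has $\vert \epsilon_p(h) - \epsilon_q(h) \vert \leq \sup_{h' \in \mathcal{H}} \vert \epsilon_p(h') - \epsilon_q(h') \vert = \mathcal{D}_{\textup{L}}(p \| q)$. Applying this to both $h^*_u$ and $\widehat{h}_{\widehat{u}}$ shows that $\mathcal{D}_{\text{R}}(p \| q) \leq \mathcal{D}_{\textup{L}}(p \| q)$ and $\vert \epsilon_p(\widehat{h}_{\widehat{u}}) - \epsilon_q(\widehat{h}_{\widehat{u}}) \vert \leq \mathcal{D}_{\textup{L}}(p \| q)$, so a crude triangle inequality gives $(\text{A}) \leq 2 \mathcal{D}_{\textup{L}}(p \| q)$. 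This step is deterministic and is precisely where the $2\mathcal{D}_{\textup{L}}(p \| q)$ in the bound originates; crucially, it lets me avoid having to prove that the empirical minimizer $\widehat{h}_{\widehat{u}}$ is close to the optimal hypothesis $h^*_u$, since $\mathcal{D}_{\textup{L}}$ caps the $p$-versus-$q$ risk gap of \emph{any} hypothesis uniformly.

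For term $(\text{B})$ I would first apply the reverse triangle inequality $\big\vert \vert a \vert - \vert b \vert \big\vert \leq \vert a - b \vert$ and regroup the four risks, yielding
\begin{equation*}
(\text{B}) \leq \vert \epsilon_p(\widehat{h}_{\widehat{u}}) - \widehat{\epsilon}_{\widehat{p}}(\widehat{h}_{\widehat{u}}) \vert + \vert \epsilon_q(\widehat{h}_{\widehat{u}}) - \widehat{\epsilon}_{\widehat{q}}(\widehat{h}_{\widehat{u}}) \vert.
\end{equation*}
Each summand is a uniform generalization gap evaluated at the data-dependent hypothesis $\widehat{h}_{\widehat{u}}$, so I would invoke Lemma~\ref{le:rc} with $p,\widehat{p}$ (and then $q,\widehat{q}$) in place of $u,\widehat{u}$; since that lemma holds uniformly over all $h \in \mathcal{H}$, it is legitimate to instantiate it at $\widehat{h}_{\widehat{u}}$ even though $\widehat{h}_{\widehat{u}}$ depends on the data. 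This bounds the two summands by $2\mathcal{R}_{\mathcal{H}}^{l}(\widehat{p})$ and $2\mathcal{R}_{\mathcal{H}}^{l}(\widehat{q})$ plus concentration terms. A union bound over the two applications (splitting the failure probability, which turns $\ln(4/\delta)$ into $\ln(8/\delta)$), together with the fact that $\widehat{p}$ and $\widehat{q}$ each contain only $N$ samples rather than the $2N$ of $\widehat{u}$ (which inflates the concentration constant by a factor of $\sqrt{2}$), accounts for the stated $12c\sqrt{\ln(8/\delta)/N}$ term.

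The main obstacle is conceptual rather than computational: the estimator is built from the empirical minimizer $\widehat{h}_{\widehat{u}}$ while the target discrepancy is defined through the population optimum $h^*_u$, so one must control a quantity evaluated at the ``wrong'' hypothesis. My resolution in term $(\text{A})$ is to \emph{not} track $\widehat{h}_{\widehat{u}} \to h^*_u$ directly, but instead to bound every hypothesis's $p$-versus-$q$ gap by $\mathcal{D}_{\textup{L}}(p \| q)$. An alternative route would go through the excess-risk estimate of Lemma~\ref{le:rc2} together with the identity $\epsilon_u = \tfrac{1}{2}(\epsilon_p + \epsilon_q)$, which relates $\epsilon_u$-optimality to the $p$- and $q$-risks, at the cost of introducing $\mathcal{R}_{\mathcal{H}}^{l}(\widehat{u})$ and then discharging it via $\mathcal{R}_{\mathcal{H}}^{l}(\widehat{u}) \leq \tfrac{1}{2}\big(\mathcal{R}_{\mathcal{H}}^{l}(\widehat{p}) + \mathcal{R}_{\mathcal{H}}^{l}(\widehat{q})\big)$.
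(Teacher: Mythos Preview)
Your proposal is correct and follows essentially the same route as the paper's proof. The paper applies the reverse triangle inequality first and then inserts $\epsilon_p(\widehat{h}_{\widehat{u}})$ and $\epsilon_q(\widehat{h}_{\widehat{u}})$ to split into the same four pieces, whereas you first insert the intermediate quantity $\vert \epsilon_p(\widehat{h}_{\widehat{u}}) - \epsilon_q(\widehat{h}_{\widehat{u}}) \vert$ and then apply the reverse triangle inequality on term $(\text{B})$; the resulting terms, the use of $\mathcal{D}_{\textup{L}}$ to cap both $\vert\epsilon_p(h^*_u)-\epsilon_q(h^*_u)\vert$ and $\vert\epsilon_p(\widehat{h}_{\widehat{u}})-\epsilon_q(\widehat{h}_{\widehat{u}})\vert$, and the two invocations of Lemma~\ref{le:rc} with a union bound are identical in both arguments.
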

\begin{proof}
We observe that
\begin{equation}
\begin{aligned}
& \vert \mathcal{D}_{\text{R}}(p \| q) - \mathcal{\widehat{D}}_{\text{R}}(\widehat{p} \| \widehat{q}) \vert \\
= & \left\vert \vert \epsilon_q (h^*_u) - \epsilon_p (h^*_u)\vert - \vert \widehat{\epsilon}_{\widehat{q}}(\widehat{h}_u) - \widehat{\epsilon}_{\widehat{p}}(\widehat{h}_u) \vert \right \vert \\
\leq & \left\vert \epsilon_q(h^*_u) - \epsilon_p (h^*_u,) -  \widehat{\epsilon}_{\widehat{q}}(\widehat{h}_u)  + \widehat{\epsilon}_{\widehat{p}}(\widehat{h}_u)  \right \vert \\
= & \left\vert \epsilon_q (h^*_u) - \epsilon_p (h^*_u)  + \widehat{\epsilon}_{\widehat{p}}(\widehat{h}_u) - \epsilon_p (\widehat{h}_u) + \epsilon_p (\widehat{h}_u) - \epsilon_q (\widehat{h}_u) + \epsilon_ q (\widehat{h}_u) -  \widehat{\epsilon}_{\widehat{q}}(\widehat{h}_u)\right \vert \\
\leq &  \left\vert \epsilon_q (h^*_u) - \epsilon_p (h^*_u) \right\vert + \left\vert \epsilon_p (\widehat{h}_u) - \epsilon_q (\widehat{h}_u)\right \vert + \left\vert \widehat{\epsilon}_{\widehat{p}}(\widehat{h}_u) - \epsilon_p (\widehat{h}_u) \right \vert + \left\vert \epsilon_q (\widehat{h}_u) -  \widehat{\epsilon}_{\widehat{q}}(\widehat{h}_u) \right \vert \\
\leq &  2 \mathcal{D}_{\text{L}}(p \| q) + \left\vert \widehat{\epsilon}_{\widehat{p}}(\widehat{h}_u) - \epsilon_p (\widehat{h}_u) \right \vert + \left\vert \epsilon_q (\widehat{h}_u) -  \widehat{\epsilon}_{\widehat{q}}(\widehat{h}_u) \right \vert.\\
\end{aligned}
\label{eq:smmg}
\end{equation}
The first two inequalities are owing to the triangle inequality, and the third inequality is due to the definition of L-divergence Eq.(\ref{eq:l}). We complete the proof by applying Lemma~\ref{le:rc} to bound $\left\vert \widehat{\epsilon}_{\widehat{p}}(\widehat{h}_u) - \epsilon_p (\widehat{h}_u) \right \vert$ and $ \left\vert \epsilon_q (\widehat{h}_u) -  \widehat{\epsilon}_{\widehat{q}}(\widehat{h}_u) \right \vert$ in Eq.(\ref{eq:smmg}).
\end{proof}

\begin{corollary}
Following the conditions of Theorem~\ref{t:geb}, the upper bound of $\sqrt{\textup{Var}\left[ \mathcal{\widehat{D}}_{\text{R}}(\widehat{p} \| \widehat{q}) \right]}$ is
\begin{equation*}
\begin{aligned} \frac{34c}{\sqrt{N}} + \sqrt{2\pi} \left( 2 \mathcal{D}_{\textup{L}}(p \| q) + 2 \mathcal{R}_{\mathcal{H}}^{l}(\widehat{p}) + 2 \mathcal{R}_{\mathcal{H}}^{l}(\widehat{q}) \right).
\end{aligned}
\end{equation*}
\end{corollary}

\begin{proof}
For convenience, we assume $\mathcal{C} = 2 \mathcal{D}_{\text{L}}(p \| q) + 2 \mathcal{R}_{\mathcal{H}}^{l}(\widehat{p}) + 2 \mathcal{R}_{\mathcal{H}}^{l}(\widehat{q})$. Following to proof of Corollary 1 in~\cite{HD:22}, we have
\end{proof}
\begin{equation}
\begin{aligned}
\text{Var}\left[ \widehat{\mathcal{D}}_{\textup{R}} (\widehat{p} \| \widehat{q}) \right] \leq & \mathbb{E}\left[  \left( \mathcal{D}_{\textup{R}} (p \| q) - \widehat{\mathcal{D}}_{\textup{R}}(\widehat{p} \| \widehat{q}) \right)^2 \right] \\
= & \int_{t = 0}^{\infty} \text{Pr} \left[ \left \vert \mathcal{D}_{\textup{R}} (p \| q) - \widehat{\mathcal{D}}_{\textup{R}}(\widehat{p} \| \widehat{q}) \right\vert \geq \sqrt{t} \right] dt \\
 \overset{s = \sqrt{t}}{=} & \int_{s = 0}^{\infty} \text{Pr} \left[ \left \vert  \mathcal{D}_{\textup{R}} (p \| q) - \widehat{\mathcal{D}}_{\textup{R}}(\widehat{p} \| \widehat{q}) \right\vert \geq s \right] 2sds \\
\leq & \int_{s = 0}^{\mathcal{C}} 2sds + \int_{s = 0}^{\infty} \text{Pr} \left[ \left \vert  \widehat{\mathcal{D}}_{\textup{R}} (\widehat{p} \| \widehat{q}) - \mathcal{D}_{\textup{R}} (p \| q)\right\vert \geq \mathcal{C} + s \right] 2(\mathcal{C} + s)ds \\
%\leq & \mathcal{C}^2 + \int_{s = 0}^{\infty}  2(\mathcal{\mathcal{C}} + s)\delta ds \\
\leq & \mathcal{C}^2 + 16 \int_{s = 0}^{\infty}  (\mathcal{C} + s) e^{  \frac{-N s^2}{144 c^2} } ds \\
= & \mathcal{C}^2 + 16 \mathcal{C} \int_{s = 0}^{\infty} e^{\frac{-N s^2}{144c^2}} ds + 16 \int_{s = 0}^{\infty}  s e^{  \frac{-N s^2}{144 c^2} } ds\\
\overset{t = \frac{s}{c} \sqrt{\frac{N}{144}}}{=} & \mathcal{C}^2 + 16\mathcal{C}c\sqrt{\frac{a}{N}}\int_{t = 0}^{\infty} e^{-t^2} dt + \frac{2304c^2}{N }\int_{s = 0}^{\infty}  t e^{-t^2} dt\\
= & \mathcal{C}^2 + 96 \mathcal{C} c\sqrt{\frac{ \pi}{N}} + \frac{1152c^2}{N } \\
\leq & \left( \frac{34c}{\sqrt{N}} + \mathcal{C} \sqrt{2\pi} \right)^2,
\end{aligned}
\end{equation}
where the third inequality is due to
\begin{equation}
\text{Pr} \left[ \left \vert  \mathcal{D}_{\textup{R}} (p \| q) - \widehat{\mathcal{D}}_{\textup{R}}(\widehat{p} \| \widehat{q}) \right\vert \geq \mathcal{C} + s \right] \leq 8 e^{  \frac{-N s^2}{144 c^2}},
\end{equation}
obtained from the results of Theorem~\ref{t:geb}.

\begin{proposition}
Based on the conditions of Theorem~\ref{t:geb}, we assume $\mathcal{H}$ is the class of real-valued networks of depth $D$ over the domain $\mathcal{X}$. Let the Frobenius norm of the weight matrices be at most $M_1,\ldots,M_D$, the activation function be 1-Lipschitz, positive-homogeneous and applied element-wise (such as the ReLU). Then, with probability at least $1 - \delta$, we have,
\begin{equation*}
\vert \mathcal{D}_{\textup{R}} (p \| q) - \widehat{\mathcal{D}}_{\textup{R}}(\widehat{p} \| \widehat{q}) \vert \leq  2 \mathcal{D}_{\textup{L}}(p \| q) + \frac{4LB(\sqrt{2D \ln2} + 1)\prod_{i = 1}^D M_i}{\sqrt{N}}   + 12c \sqrt{\frac{\ln(8/\delta)}{N}}.
\end{equation*}
\end{proposition}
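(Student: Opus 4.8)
The plan is to invoke Theorem~\ref{t:geb} as a black box and then replace its two data-dependent Rademacher terms, $\mathcal{R}_{\mathcal{H}}^{l}(\widehat{p})$ and $\mathcal{R}_{\mathcal{H}}^{l}(\widehat{q})$, by an explicit bound for the stated DNN class. Since everything else in the Theorem~\ref{t:geb} bound---namely $2\mathcal{D}_{\textup{L}}(p\|q)$ and $12c\sqrt{\ln(8/\delta)/N}$---is already in closed form and carries over verbatim, the entire task reduces to controlling a single Rademacher complexity of the form $\mathcal{R}_{\mathcal{H}}^{l}(\widehat{p})$ on $N$ samples.

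First I would peel the loss off using Talagrand's contraction lemma (Lemma~\ref{le:tc}): because $l$ is $L$-Lipschitz, $\mathcal{R}_{\mathcal{H}}^{l}(\widehat{p}) \leq L\,\mathcal{R}_{\mathcal{H}}(\widehat{p})$, where $\mathcal{R}_{\mathcal{H}}(\widehat{p})$ is the Rademacher complexity of the raw network outputs. Here one must be attentive to the normalization: the definition in \eqref{eq:rc} divides by $2N$ because $\widehat{u}$ contains $2N$ points, whereas $\widehat{p}$ and $\widehat{q}$ contain only $N$ points, so their Rademacher complexities carry the factor $1/N$ instead.

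Next I would apply the DNN sample-complexity bound (Lemma~\ref{le:rcn}) to the expected supremum over the depth-$D$ network class. That lemma yields $\mathbb{E}_{\sigma}[\sup_{h}\sum_{i=1}^{N}\sigma_i h(x_i)] \leq \sqrt{N}\,B(\sqrt{2D\ln 2}+1)\prod_{i=1}^{D}M_i$; dividing by $N$ gives $\mathcal{R}_{\mathcal{H}}(\widehat{p}) \leq B(\sqrt{2D\ln 2}+1)\prod_{i=1}^{D}M_i/\sqrt{N}$, and hence $\mathcal{R}_{\mathcal{H}}^{l}(\widehat{p}) \leq LB(\sqrt{2D\ln 2}+1)\prod_{i=1}^{D}M_i/\sqrt{N}$. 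The identical bound holds for $\widehat{q}$ by symmetry, since both samples have size $N$ and live in the same input ball $\|x\|\leq B$.

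Finally I would substitute these two bounds into Theorem~\ref{t:geb}: the sum $2\mathcal{R}_{\mathcal{H}}^{l}(\widehat{p})+2\mathcal{R}_{\mathcal{H}}^{l}(\widehat{q})$ collapses to $4LB(\sqrt{2D\ln 2}+1)\prod_{i=1}^{D}M_i/\sqrt{N}$, which is exactly the middle term of the claimed bound, completing the argument. I do not anticipate a genuine obstacle, as this is essentially a two-lemma substitution; the only point demanding care is the bookkeeping of the sample-size normalization between the $2N$-point definition of $\mathcal{R}_{\mathcal{H}}^{l}(\widehat{u})$ and the $N$-point quantities $\mathcal{R}_{\mathcal{H}}^{l}(\widehat{p})$ and $\mathcal{R}_{\mathcal{H}}^{l}(\widehat{q})$, so that the $1/N$ factor pairs correctly with the $\sqrt{N}$ from Lemma~\ref{le:rcn} to yield the advertised $1/\sqrt{N}$ rate.
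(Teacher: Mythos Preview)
Your proposal is correct and matches the paper's own proof essentially verbatim: the paper simply states that the result follows by applying Lemma~\ref{le:tc} (Talagrand contraction) and Lemma~\ref{le:rcn} (DNN Rademacher bound) to the Rademacher terms in Theorem~\ref{t:geb}. Your write-up is in fact more careful than the paper's, as you explicitly track the $1/N$ versus $1/(2N)$ normalization, which the paper leaves implicit.
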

\begin{proof}
We complete the proof by applying Lemma~\ref{le:tc} and Lemma~\ref{le:rcn} to bound the Rademacher complexity of deep neural networks in Theorem~\ref{t:geb}.
\end{proof}

\begin{corollary}
Following the conditions of Proposition~\ref{t:gebs}, with probability at least $1 - \delta$, we have the following conditional bounds if $\mathcal{D}_{\textup{L}}(p \| q) \leq \left\vert  \epsilon_u (h^*_u) -  \epsilon_u (\widehat{h}_u) \right\vert $,
\begin{equation*}
\vert \mathcal{D}_{\textup{R}} (p \| q) - \widehat{\mathcal{D}}_{\textup{R}}(\widehat{p} \| \widehat{q}) \vert \leq \frac{8LB(\sqrt{2D \ln2} + 1)\prod_{i = 1}^D M_i}{\sqrt{N}} + 22c \sqrt{\frac{\ln(16/\delta)}{N}}.
\end{equation*}
\end{corollary}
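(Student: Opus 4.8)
The plan is to start from the unconditional bound of Proposition~\ref{t:gebs} and to spend the hypothesis $\mathcal{D}_{\textup{L}}(p\|q)\le\lvert\epsilon_u(h^*_u)-\epsilon_u(\widehat h_u)\rvert$ in order to replace the leading $2\mathcal{D}_{\textup{L}}(p\|q)$ term by a quantity that decays like $1/\sqrt N$, thereby absorbing it into the network-complexity term. The only genuinely new ingredient beyond Proposition~\ref{t:gebs} is Lemma~\ref{le:rc2}, which controls the excess risk $\epsilon_u(\widehat h_u)-\epsilon_u(h^*_u)$ of the minimum hypothesis; everything else is bookkeeping.

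First I would observe that $h^*_u$ minimizes the expected risk on $u$, so $\epsilon_u(h^*_u)\le\epsilon_u(\widehat h_u)$ and hence $\lvert\epsilon_u(h^*_u)-\epsilon_u(\widehat h_u)\rvert=\epsilon_u(\widehat h_u)-\epsilon_u(h^*_u)$. Combining this identity with the stated condition gives $\mathcal{D}_{\textup{L}}(p\|q)\le\epsilon_u(\widehat h_u)-\epsilon_u(h^*_u)$, which Lemma~\ref{le:rc2} (applied at level $\delta'$) bounds by $2\mathcal{R}_{\mathcal H}^l(\widehat u)+5c\sqrt{\ln(8/\delta')/N}$. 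Next I would bound the Rademacher complexity of the $2N$-point mixed sample $\widehat u$: by Talagrand's contraction lemma (Lemma~\ref{le:tc}) together with the network bound (Lemma~\ref{le:rcn}) applied to $2N$ points, one gets $\mathcal{R}_{\mathcal H}^l(\widehat u)\le LB(\sqrt{2D\ln2}+1)\prod_{i=1}^D M_i/\sqrt{2N}$. Consequently $2\mathcal{D}_{\textup{L}}(p\|q)\le 4\mathcal{R}_{\mathcal H}^l(\widehat u)+10c\sqrt{\ln(8/\delta')/N}$, and since $4/\sqrt2=2\sqrt2\le4$ the network part is at most $4LB(\sqrt{2D\ln2}+1)\prod_{i=1}^D M_i/\sqrt N$, i.e. it matches the form of the per-dataset complexity term already appearing in Proposition~\ref{t:gebs}.

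The remaining work is a union bound. Proposition~\ref{t:gebs} and Lemma~\ref{le:rc2} each hold on an event of probability $1-\delta'$, so I would apply both at $\delta'=\delta/2$ to guarantee that they hold simultaneously with probability at least $1-\delta$; this turns every $\ln(8/\delta')$ into $\ln(16/\delta)$. Substituting the $1/\sqrt N$ bound on $2\mathcal{D}_{\textup{L}}(p\|q)$ back into Proposition~\ref{t:gebs} then merges the two network terms into the coefficient $4+4=8$ and the constant terms into $12c+10c=22c$, yielding the claimed bound. I expect the main thing to get right is this probability accounting, namely distributing $\delta$ as $\delta/2$ so that $\ln(8/\delta)$ becomes $\ln(16/\delta)$ and the $c$-coefficients add up to $22$, together with the mild rounding $2\sqrt2\le4$ needed to fold the mixed-sample complexity of $\widehat u$ into the same $1/\sqrt N$ form as the complexities of $\widehat p$ and $\widehat q$ already present in Proposition~\ref{t:gebs}.
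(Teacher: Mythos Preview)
Your proposal is correct and follows essentially the same approach as the paper: use the hypothesis to replace $2\mathcal{D}_{\textup{L}}(p\|q)$ by twice the excess risk $\epsilon_u(\widehat h_u)-\epsilon_u(h^*_u)$, bound the latter via Lemma~\ref{le:rc2}, convert the resulting Rademacher complexity of $\widehat u$ with Lemmas~\ref{le:tc} and~\ref{le:rcn}, and union-bound at level $\delta/2$ so that $\ln(8/\delta)$ becomes $\ln(16/\delta)$ and the $c$-coefficients combine to $22$. The only cosmetic difference is that the paper re-enters the decomposition from the proof of Theorem~\ref{t:geb} (writing the first term as a $\max$ before applying the condition), whereas you invoke Proposition~\ref{t:gebs} as a black box and substitute; your packaging is arguably cleaner and yields the same constants.
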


\begin{proof}
Following the proof of Theorem~\ref{t:geb}, we have
\begin{equation}
\begin{aligned}
& \vert \mathcal{D}_{\textup{R}} (p \| q) - \widehat{\mathcal{D}}_{\textup{R}}(\widehat{p} \| \widehat{q}) \vert \\
\leq &  \left\vert \epsilon_q (h^*_u) - \epsilon_p (h^*_u)  + \widehat{\epsilon}_{\widehat{p}}(\widehat{h}_u) - \epsilon_p (\widehat{h}_u)\right \vert + \left\vert \widehat{\epsilon}_{\widehat{p}}(\widehat{h}_u) - \epsilon_p (\widehat{h}_u) \right \vert + \left\vert \epsilon_q (\widehat{h}_u) -  \widehat{\epsilon}_{\widehat{q}}(\widehat{h}_u) \right \vert \\
= &   \max\left( \left\vert \epsilon_q(h^*) - \epsilon_p(h^*) + \epsilon_p(\widehat{h}) - \epsilon_q(\widehat{h})\right \vert , \left\vert \left( \epsilon_q(h^*) + \epsilon_p(h^*) \right)  - \left( \epsilon_p(\widehat{h}) - \epsilon_q(\widehat{h})\right) \right \vert\right) \\
& + \left\vert \widehat{\epsilon}_{\widehat{p}}(\widehat{h}_u) - \epsilon_p (\widehat{h}_u) \right \vert + \left\vert \epsilon_q (\widehat{h}_u) -  \widehat{\epsilon}_{\widehat{q}}(\widehat{h}_u) \right \vert \\
\leq & 2 \max\left(D_{\text{L}}(p \| q) , \left\vert  \epsilon_u(h^*)  - \epsilon_u(\widehat{h}) \right \vert\right) + \left\vert \widehat{\epsilon}_{\widehat{p}}(\widehat{h}_u) - \epsilon_p (\widehat{h}_u) \right \vert + \left\vert \epsilon_q (\widehat{h}_u) -  \widehat{\epsilon}_{\widehat{q}}(\widehat{h}_u) \right \vert \\
\leq & 2 \left\vert  \epsilon_u(h^*)  - \epsilon_u(\widehat{h}) \right \vert + \left\vert \widehat{\epsilon}_{\widehat{p}}(\widehat{h}_u) - \epsilon_p (\widehat{h}_u) \right \vert + \left\vert \epsilon_q (\widehat{h}_u) -  \widehat{\epsilon}_{\widehat{q}}(\widehat{h}_u) \right \vert.\\
\end{aligned}
\label{eq:smmg2}
\end{equation}

The first two inequalities are owing to the triangle inequality, and the third inequality is due to the given condition $\mathcal{D}_{\text{L}}(p \| q) \leq \left\vert  \epsilon_u (h^*_u) -  \epsilon_u (\widehat{h}_u) \right\vert$. We complete the proof by applying Lemma~\ref{le:rc2} to bound $\left\vert  \epsilon_u(h^*)  - \epsilon_u(\widehat{h}) \right \vert$ and Lemma~\ref{le:rc} to bound $\left\vert \widehat{\epsilon}_{\widehat{p}}(\widehat{h}_u) - \epsilon_p (\widehat{h}_u) \right \vert$ and $ \left\vert \epsilon_q (\widehat{h}_u) -  \widehat{\epsilon}_{\widehat{q}}(\widehat{h}_u) \right \vert$ in Eq.(\ref{eq:smmg}).
\end{proof}

\begin{corollary}
Following the conditions of Proposition~\ref{t:gebs}, as $N \rightarrow \infty$, we have,
\begin{equation*}
\widehat{\mathcal{D}}_{\textup{R}}(\widehat{p} \| \widehat{q}) \leq 2 \mathcal{D}_{\textup{L}}(p \|q) + \mathcal{D}_{\textup{R}} (p \| q).
\end{equation*}
\end{corollary}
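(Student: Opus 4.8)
The plan is to obtain this statement as a direct asymptotic consequence of the finite-sample bound established in Proposition~\ref{t:gebs}, using the elementary fact that a bound on $\vert a - b \vert$ immediately yields a one-sided bound on $a$ in terms of $b$.

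First I would start from the inequality of Proposition~\ref{t:gebs}, which holds with probability at least $1 - \delta$:
\begin{equation*}
\vert \mathcal{D}_{\textup{R}} (p \| q) - \widehat{\mathcal{D}}_{\textup{R}}(\widehat{p} \| \widehat{q}) \vert \leq 2 \mathcal{D}_{\textup{L}}(p \| q) + \frac{4LB(\sqrt{2D \ln2} + 1)\prod_{i = 1}^D M_i}{\sqrt{N}} + 12c \sqrt{\frac{\ln(8/\delta)}{N}}.
\end{equation*}
The key observation is that the last two terms both carry a factor $1/\sqrt{N}$, while the remaining quantities, namely $\mathcal{D}_{\textup{L}}(p \| q)$ (a supremum of expected-risk gaps over $\mathcal{H}$) together with the network constants $L, B, D, M_i$, do not depend on the sample size $N$. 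Hence as $N \to \infty$ the two $1/\sqrt{N}$ terms vanish, and for any fixed confidence level $\delta$ we are left in the limit with $\vert \mathcal{D}_{\textup{R}} (p \| q) - \widehat{\mathcal{D}}_{\textup{R}}(\widehat{p} \| \widehat{q}) \vert \leq 2 \mathcal{D}_{\textup{L}}(p \| q)$.

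Next I would drop the absolute value on one side, using the trivial bound $\widehat{\mathcal{D}}_{\textup{R}}(\widehat{p} \| \widehat{q}) - \mathcal{D}_{\textup{R}} (p \| q) \leq \vert \mathcal{D}_{\textup{R}} (p \| q) - \widehat{\mathcal{D}}_{\textup{R}}(\widehat{p} \| \widehat{q}) \vert$, and rearrange to conclude $\widehat{\mathcal{D}}_{\textup{R}}(\widehat{p} \| \widehat{q}) \leq \mathcal{D}_{\textup{R}} (p \| q) + 2 \mathcal{D}_{\textup{L}}(p \| q)$, which is exactly the claimed bound.

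The argument is essentially mechanical once Proposition~\ref{t:gebs} is available; the only point requiring a little care is the interaction between the high-probability qualifier and the limit. I expect the mild obstacle to be making the $N \to \infty$ statement precise: one should fix $\delta$ before taking the limit so that the confidence term $12c\sqrt{\ln(8/\delta)/N} \to 0$ cleanly, and note that $\mathcal{D}_{\textup{L}}(p \| q)$ is a fixed property of the two distributions and the hypothesis space rather than a sample-dependent quantity, so it survives the limit unchanged. No further estimates are needed.
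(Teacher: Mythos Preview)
Your proposal is correct and matches the paper's own proof essentially step for step: the paper also invokes Proposition~\ref{t:gebs}, observes that the two $1/\sqrt{N}$ terms vanish as $N\to\infty$ for any fixed $\delta\in(0,1)$, and then finishes by ``applying the triangle inequality,'' which is exactly your one-sided extraction $\widehat{\mathcal{D}}_{\textup{R}}(\widehat{p}\|\widehat{q}) \le \mathcal{D}_{\textup{R}}(p\|q) + \vert \mathcal{D}_{\textup{R}}(p\|q) - \widehat{\mathcal{D}}_{\textup{R}}(\widehat{p}\|\widehat{q})\vert$. If anything, you supply more detail than the paper does.
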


\begin{proof}
Based on the result on Proposition~\ref{t:gebs}, for any $\delta \in (0,1)$, we know that
\begin{equation}
\frac{4LB(\sqrt{2D \ln2} + 1)\prod_{i = 1}^D M_i}{\sqrt{N}}  + 12c \sqrt{\frac{\ln(8/\delta)}{N}} \rightarrow 0,
\end{equation}
when $N \rightarrow \infty$. We complete the proof by applying the triangle inequality.
\end{proof}

\section{Algorithm Procedure}
\subsection{Training Procedure of R-Div}~\label{ap:tp}
\begin{algorithm}[h!]
   \caption{Estimate model-oriented distribution discrepancy by R-divergence}
   \label{alg:smm}
\begin{algorithmic}
   \STATE {\bfseries Input:} two datasets $\widehat{p}$ and $\widehat{q}$ and a learning model $\mathcal{T}$ with hypothesis space $\mathcal{H}$ and loss function $l$
   \STATE Generate the merged dataset: $\widehat{u} = \widehat{p} \cup \widehat{q}$
   \STATE Learn the minimum hypothesis on the mixed data:
   \begin{equation*}
     \widehat{h}_u \in \arg \min_{h \in \mathcal{H}}  \widehat{\epsilon}_{\widehat{u}}(h)
   \end{equation*}
   \STATE Evaluate the empirical risks: $\widehat{\epsilon}_{\widehat{p}} (\widehat{h}_{\widehat{u}})$ and $\widehat{\epsilon}_{\widehat{q}} (\widehat{h}_{\widehat{u}})$
   \STATE Estimate the R-divergence as the discrepancy:
   \begin{equation*}
    \mathcal{\widehat{D}}_{\text{R}}(\widehat{p} \| \widehat{q}) = \vert \widehat{\epsilon}_{\widehat{p}} (\widehat{h}_{\widehat{u}}) - \widehat{\epsilon}_{\widehat{q}} (\widehat{h}_{\widehat{u}}) \vert
   \end{equation*}
   \STATE {\bfseries Output:} empirical estimator $\mathcal{\widehat{D}}_{\text{R}}(\widehat{p} \| \widehat{q})$
\end{algorithmic}
\end{algorithm}

\subsection{Calculation Procedure of the Average Test Power}~\label{ap:atp}
\begin{algorithm}[h!]
   \caption{Calculate the average test power of R-divergence}
   \label{alg:tp}
\begin{algorithmic}
   \STATE {\bfseries Input:} datasets $\widehat{p}$ and $\widehat{q}$, empirical estimator $\mathcal{\widehat{D}}_{\textup{R}}( \cdot \| \cdot)$, $\alpha$, $K$, $Z$
   \FOR{$k=1$ {\bfseries to} $K$}
   \FOR{$z=1$ {\bfseries to} $Z$}
   \IF{$z == 1$}
   \STATE $(\widehat{p}^1, \widehat{q}^1) = (\widehat{p}, \widehat{q})$
   \ELSE
   \STATE Generate $(\widehat{p}^z, \widehat{q}^z)$ by uniformly randomly swapping samples between $\widehat{p}$ and $\widehat{q}$
   \ENDIF
   \STATE Calculate $\mathcal{\widehat{D}}_{\textup{R}}( \widehat{p}^z \| \widehat{q}^z)$
   \ENDFOR
   \STATE Obtain $\mathcal{G} = \{ \mathcal{\widehat{D}}_{\textup{R}}( \widehat{p}^z \| \widehat{q}^z) \}_{z = 1}^Z$
   \STATE $r_k = \left( \text{$\mathcal{\widehat{D}}_{\textup{R}}( \widehat{p} \| \widehat{q})$ is in the top $\alpha$-quantile among $\mathcal{G}$} \right)$
   \ENDFOR
   \STATE {\bfseries Output: } average test power $\frac{\sum_{k = 1}^K r_k}{K}$
\end{algorithmic}
\end{algorithm}

\section{Compared Methods} \label{ap:cm}
Mean embedding (ME)~\cite{ME:15} and smooth characteristic functions (SCF)~\cite{SCF:16} are the state-of-the-art methods using differences in Gaussian mean embeddings at a set of optimized points and frequencies, respectively. Classifier two-sample tests, including C2STS-S~\cite{S:17} and C2ST-L~\cite{CSL:22}, apply the classification accuracy of a binary classifier to distinguish between the two distributions. The binary classifier treats samples from one dataset as positive and the other dataset as negative. These two methods assume that the binary classifier cannot distinguish these two kinds of samples if their distributions are identical. Differently, C2STS-S and C2ST-L apply the test error and the test error gap to evaluate the discrepancy, respectively. MMD-O~\cite{MMDO:12} measures the maximum mean discrepancy (MMD) with a Gaussian Kernel~\cite{MK:11}, and MMD-D~\cite{MMDD:20} improves the performance of MMD-O by replacing the Gaussian Kernel with a learnable deep kernel. H-Divergence (H-Div)~\cite{HD:22} learns optimal hypotheses for the mixture distribution and each individual distribution for the specific model, assuming that the expected risk of training data on the mixture distribution is higher than that on each individual distribution if the two distributions are identical. The equations of these compared methods are presented in~\figurename~\ref{tb:mc}.

\begin{table}[] \tiny
\caption{Method Comparison.}
\label{tb:mc}
\begin{center}
\begin{tabular}{ccl}
\toprule
Methods                & Discrepancy                                                                                                                                                   & \multicolumn{1}{c}{Remarks}                                                                                                                                                                                                                                                                                                                                                                                 \\ \hline
\multirow{4}{*}{ME}    & \multirow{4}{*}{$\sqrt{\frac{1}{J} \sum_{j = 1}^{J} \left(\mu_p(T_j) - \mu_q(T_j) \right)^2}$}                                            & \multirow{4}{*}{\begin{tabular}[c]{@{}l@{}}I:$\mu_p(t) = \int k(x,t) d p(x)$\\ II:$k(x,t) = \exp(- \Vert x - y\Vert^2 / \gamma^2 )$\\ III:$T \text{\, is drawn from an absolutely continuous distribution.}$\end{tabular}}                                                                                                                                                                                  \\
                       & \multicolumn{1}{l}{}                                                                                                                                          &                                                                                                                                                                                                                                                                                                                                                                                                             \\
                       & \multicolumn{1}{l}{}                                                                                                                                          &                                                                                                                                                                                                                                                                                                                                                                                                             \\
                       & \multicolumn{1}{l}{}                                                                                                                                          &                                                                                                                                                                                                                                                                                                                                                                                                             \\ \hline
\multirow{4}{*}{SCF}   & \multirow{4}{*}{$\sqrt{\frac{1}{2J} \sum_{j = 1}^{2} \left(z_p^{\sin}(T_j) - z_p^{\sin}(T_j) \right)^2 + \left(z_p^{\cos}(T_j) - z_p^{\cos}(T_j) \right)^2}$} & \multirow{4}{*}{\begin{tabular}[c]{@{}l@{}}I:$z_p^{\sin}(t) = \int \kappa(x) \sin(x^T t) d p(x)$\\ II:$z_p^{\cos}(t) = \int \kappa(x) \cos(x^T t) d p(x)$\\ III:$\kappa(x) \text{\, is the Fourier transform of a kernel.}$\end{tabular}}                                                                                                                                                                   \\
                       &                                                                                                                                                               &                                                                                                                                                                                                                                                                                                                                                                                                             \\
                       &                                                                                                                                                               &                                                                                                                                                                                                                                                                                                                                                                                                             \\
                       &                                                                                                                                                               &                                                                                                                                                                                                                                                                                                                                                                                                             \\ \hline
\multirow{3}{*}{C2STS-S}    & \multirow{3}{*}{$\frac{1}{\vert \hat{p}_{\text{te}}\vert} \sum_{ x \in \hat{p}_{\text{te}}} f_{w}(x) - \frac{1}{\vert \hat{q}_{\text{te}}\vert} \sum_{ x' \in \hat{q}_{\text{te}}} f_{w}(x')$} & \multirow{6}{*}{\begin{tabular}[c]{@{}l@{}}$I: \hat{p}_{\text{tr}},\hat{p}_{\text{te}} \sim p, \hat{q}_{\text{tr}},\hat{q}_{\text{te}} \sim q$\\ II: $f_{w} \text{\, is a binary classifier to separate \,} \hat{p}_{\text{tr}} \text{\, and\,} \hat{q}_{\text{tr}}$\\ III: $\text{Samples from $p$ and $q$ are labeled with $0$ and $1$, respectively.}$\\ IV: $g_{w}(x,y) = \mathbb{I} \left[ \mathbb{I} \left(f_{w}(x) > \frac{1}{2} \right) = y\right]$\end{tabular}} \\
                            &                                                                                                                                                                                                &                                                                                                                                                                                                                                                                                                                                                                                                                                                           \\
                            &                                                                                                                                                                                                &                                                                                                                                                                                                                                                                                                                                                                                                                                                           \\ \cline{1-2}
\multirow{3}{*}{C2ST-L}     & \multirow{3}{*}{$\frac{1}{\vert \hat{p}_{\text{te}}\vert + \vert \hat{q}_{\text{te}}\vert} \sum_{ x \in \hat{p}_{\text{te}}} g_{w}(x,0) + \sum_{ x' \in \hat{q}_{\text{te}}} g_{w}(x',1)$}     &                                                                                                                                                                                                                                                                                                                                                                                                                                                           \\
                            &                                                                                                                                                                                                &                                                                                                                                                                                                                                                                                                                                                                                                                                                           \\
                            &                                                                                                                                                                                                &                                                                                                                                                                                                                                                                                                                                                                                                                                                           \\ \hline
\multirow{3}{*}{MMD-O} & \multirow{3}{*}{\begin{tabular}[c]{@{}c@{}}$\sqrt{\mathbb{E} \left[ k(x,x') + k(y,y') - 2k(x,y)\right]}$,\\ $x,x \sim p, y,y' \sim q$\end{tabular}}                                                                                                          & \multirow{3}{*}{I:$k(x,y) \text{\, is a simple kernal.} $}                                                                                                                                                                                                                                                                                                                                 \\
                       &                                                                                                                                                                                                                                                              &                                                                                                                                                                                                                                                                                                                                                                                            \\
                       &                                                                                                                                                                                                                                                              &                                                                                                                                                                                                                                                                                                                                                                                            \\ \hline
\multirow{5}{*}{MMD-D} & \multirow{4}{*}{\begin{tabular}[c]{@{}c@{}}$\sqrt{\mathbb{E} \left[ k_w(x,x') + k_w(y,y') - 2k_w(x,y)\right]}$,\\ $x,x \sim p, y,y' \sim q$\end{tabular}}                                                                                                    & \multirow{5}{*}{\begin{tabular}[c]{@{}l@{}}I:$k_w(x,y) = [(1 - \epsilon)k_1(\phi_w(x), \phi_w(y)) + \epsilon] k_2(x,y) $\\ II:$\phi_w \text{\, is deep network with parameters \,} w$ \\ III:$k_1(x,y) = \exp(-\Vert x - y\Vert^2 / \gamma_{1}^2)$\\ IV:$k_2(x,y) = \exp(-\Vert x - y\Vert^2 / \gamma_{2}^2)$\end{tabular}}                                                                \\
                       &                                                                                                                                                                                                                                                              &                                                                                                                                                                                                                                                                                                                                                                                            \\
                       &                                                                                                                                                                                                                                                              &
                       \\
                       &                                                                                                                                                                                                                                                              &                                                                                                                                                                                                                                                                                                                                                                                            \\
                       &                                                                                                                                                                                                                                                              &                                                                                                                                                                                                                                                                                                                                                                                            \\ \hline
\multirow{3}{*}{H-Div} & \multirow{3}{*}{$\phi(\epsilon_u (h^*_u) - \epsilon_p (h^*_p),\epsilon_u (h^*_u) - \epsilon_q (h^*_q))$}                                                      & \multirow{6}{*}{\begin{tabular}[c]{@{}l@{}}I:$h^*_u \in \arg \min_{h \in \mathcal{H}} \epsilon_{u}(h)$\\ II:$h^*_q \in \arg \min_{h \in \mathcal{H}} \epsilon_{q}(h)$\\ III:$\epsilon_u (h) = \mathbb{E}_{x \sim u} l(h(x), a(x))$\\ IV:$\epsilon_q (h) = \mathbb{E}_{x \sim q} l(h(x), a(x))$\\ V:$\phi(\theta, \lambda) = \frac{\theta + \lambda}{2} \text{\, or \,} \max(\theta, \lambda)$\end{tabular}} \\
                       &                                                                                                                                                               &                                                                                                                                                                                                                                                                                                                                                                                                             \\
                       &                                                                                                                                                               &                                                                                                                                                                                                                                                                                                                                                                                                             \\ \cline{1-2}
\multirow{3}{*}{R-Div} & \multirow{3}{*}{$\vert \epsilon_p (h^*_u) - \epsilon_q (h^*_u) \vert$}                                                                                        &                                                                                                                                                                                                                                                                                                                                                                                                             \\
                       &                                                                                                                                                               &                                                                                                                                                                                                                                                                                                                                                                                                             \\
                       &                                                                                                                                                               &                                                                                                                                                                                                                                                                                                                                                                                                             \\ \bottomrule
\end{tabular}
\end{center}
\end{table}

\section{Additional Experimental Results}

\subsection{Benchmark Dataset} \label{ap:bm}
\begin{figure}[h!]
\begin{center}
\centerline{\includegraphics[width=1\columnwidth]{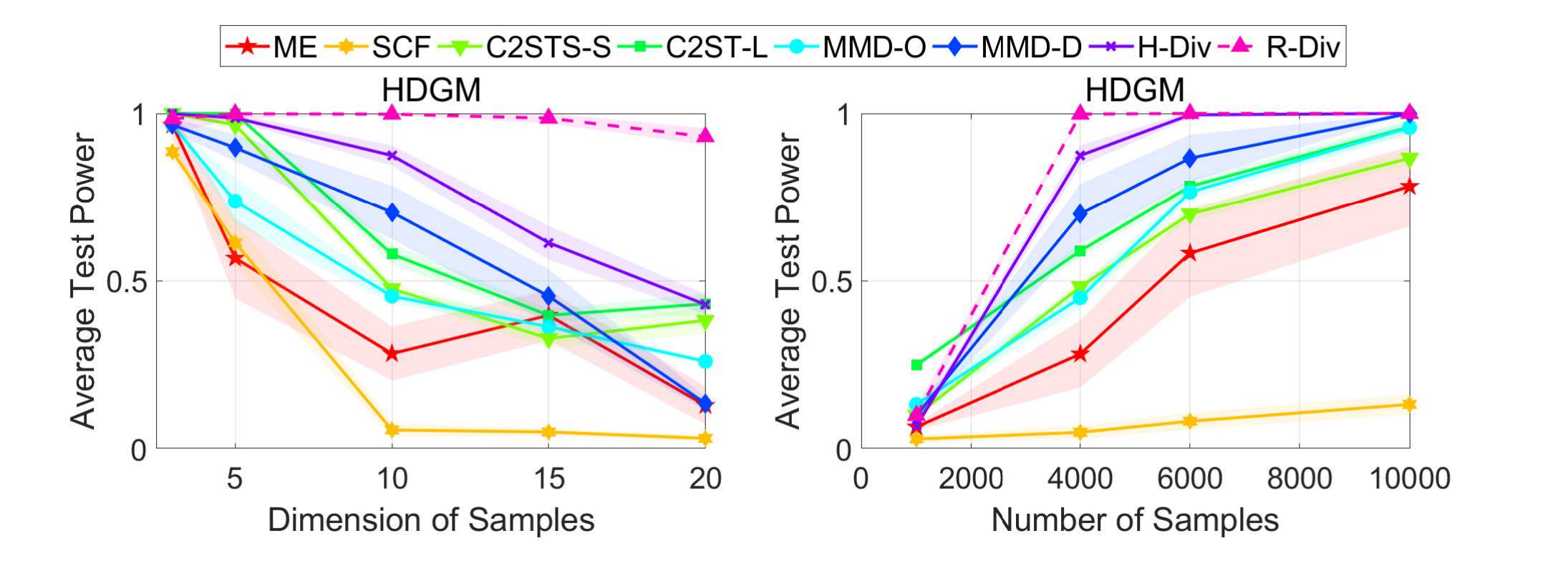}}
\caption{The average test power on HDGM with the significant level $\alpha = 0.05$.
Left panel: results with the same sample size (4,000) and different feature dimensions.
Right panel: results with the same feature dimensions (10) and different sample sizes.}
\label{fig:HDGM}
\end{center}
\end{figure}

\begin{figure}[h!]
\begin{center}
\centerline{\includegraphics[width=\columnwidth]{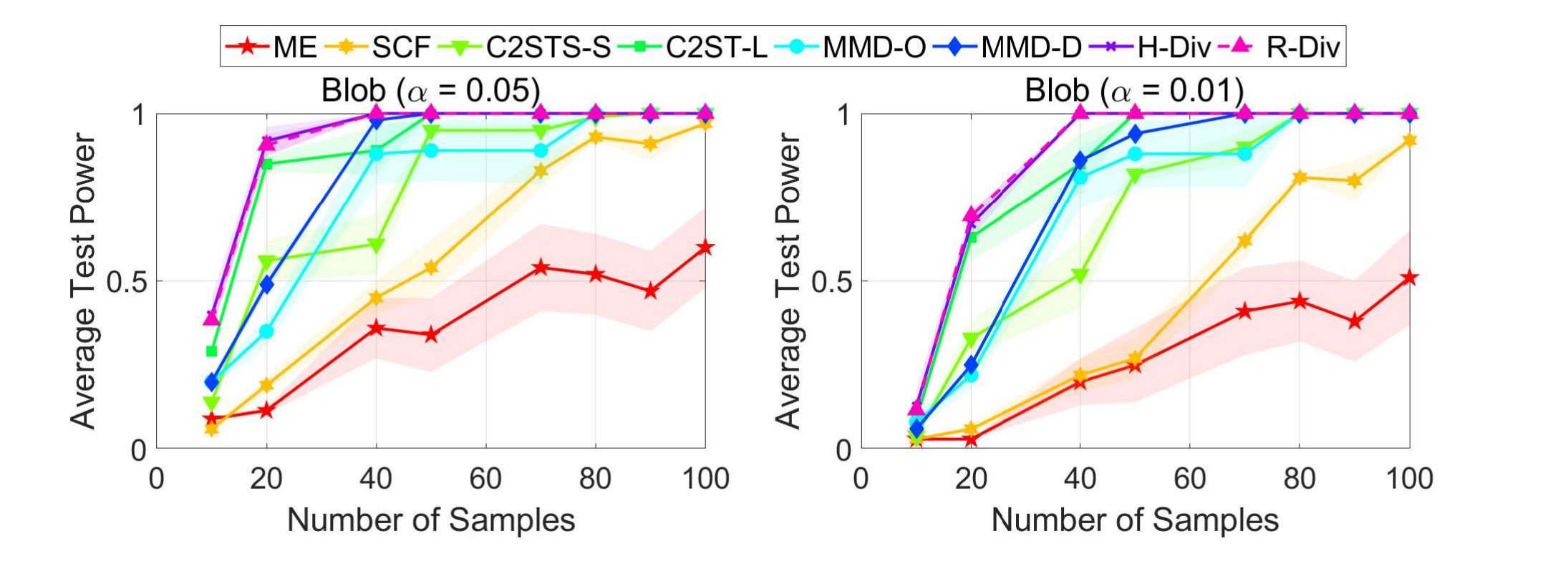}}
\caption{The average test power on Blob at the significant levels $\alpha = 0.05$ and $\alpha = 0.01$.}
\label{fig:Blob}
\end{center}
\end{figure}

\begin{table}[h!]
\caption{The average test power $\pm$ standard error with the significant level $\alpha = 0.05$ on MNIST. $N$ represents the number of samples in each given dataset, and boldface values represent the relatively better discrepancy estimation.}
\label{tab:mnist}
\begin{center}
\begin{tabular}{ccccccc}
\toprule
N       & 200   & 400   & 600   & 800   & 1000  & Avg.  \\
\midrule
ME      & 0.414${\scriptsize \pm \text{0.050}}$ & 0.921${\scriptsize \pm \text{0.032}}$ & \textbf{1.000}${\scriptsize \pm \textbf{0.000}}$ & \textbf{1.000}${\scriptsize \pm \textbf{0.000}}$ & \textbf{1.000}${\scriptsize \pm \textbf{0.000}}$ & 0.867 \\
SCF     & 0.107${\scriptsize \pm \text{0.018}}$ & 0.152${\scriptsize \pm \text{0.021}}$ & 0.294${\scriptsize \pm \text{0.008}}$ & 0.317${\scriptsize \pm \text{0.017}}$ & 0.346${\scriptsize \pm \text{0.019}}$ & 0.243 \\
C2STS-S & 0.193${\scriptsize \pm \text{0.037}}$ & 0.646${\scriptsize \pm \text{0.039}}$ & \textbf{1.000}${\scriptsize \pm \textbf{0.000}}$ & \textbf{1.000}${\scriptsize \pm \textbf{0.000}}$ & \textbf{1.000}${\scriptsize \pm \textbf{0.000}}$ & 0.768 \\
C2ST-L & 0.234${\scriptsize \pm \text{0.031}}$ & 0.706${\scriptsize \pm \text{0.047}}$ & 0.977${\scriptsize \pm \text{0.012}}$ & \textbf{1.000}${\scriptsize \pm \textbf{0.000}}$ & \textbf{1.000}${\scriptsize \pm \textbf{0.000}}$ & 0.783 \\
MMD-O   & 0.188${\scriptsize \pm \text{0.010}}$ & 0.363${\scriptsize \pm \text{0.017}}$ & 0.619${\scriptsize \pm \text{0.021}}$ & 0.797${\scriptsize \pm \text{0.015}}$ & 0.894${\scriptsize \pm \text{0.016}}$ & 0.572 \\
MMD-D   & 0.555${\scriptsize \pm \text{0.044}}$ & 0.996${\scriptsize \pm \text{0.004}}$ & \textbf{1.000}${\scriptsize \pm \textbf{0.000}}$     & \textbf{1.000}${\scriptsize \pm \textbf{0.000}}$     & \textbf{1.000}${\scriptsize \pm \textbf{0.000}}$     & 0.910 \\
H-Div   & \textbf{1.000}${\scriptsize \pm \textbf{0.000}}$  & \textbf{1.000}${\scriptsize \pm \textbf{0.000}}$ & \textbf{1.000}${\scriptsize \pm \textbf{0.000}}$     & \textbf{1.000}${\scriptsize \pm \textbf{0.000}}$     & \textbf{1.000}${\scriptsize \pm \textbf{0.000}}$     & \textbf{1.000}     \\
R-Div   & \textbf{1.000}${\scriptsize \pm \textbf{0.000}}$  & \textbf{1.000}${\scriptsize \pm \textbf{0.000}}$ & \textbf{1.000}${\scriptsize \pm \textbf{0.000}}$     & \textbf{1.000}${\scriptsize \pm \textbf{0.000}}$     & \textbf{1.000}${\tiny \pm \textbf{0.000}}$     & \textbf{1.000}     \\
\bottomrule
\end{tabular}
\end{center}
\end{table}

\subsection{PACS Dataset}~\label{ap:pacs}
\begin{figure}[h!]
\begin{center}
\centerline{\includegraphics[width=0.6\columnwidth]{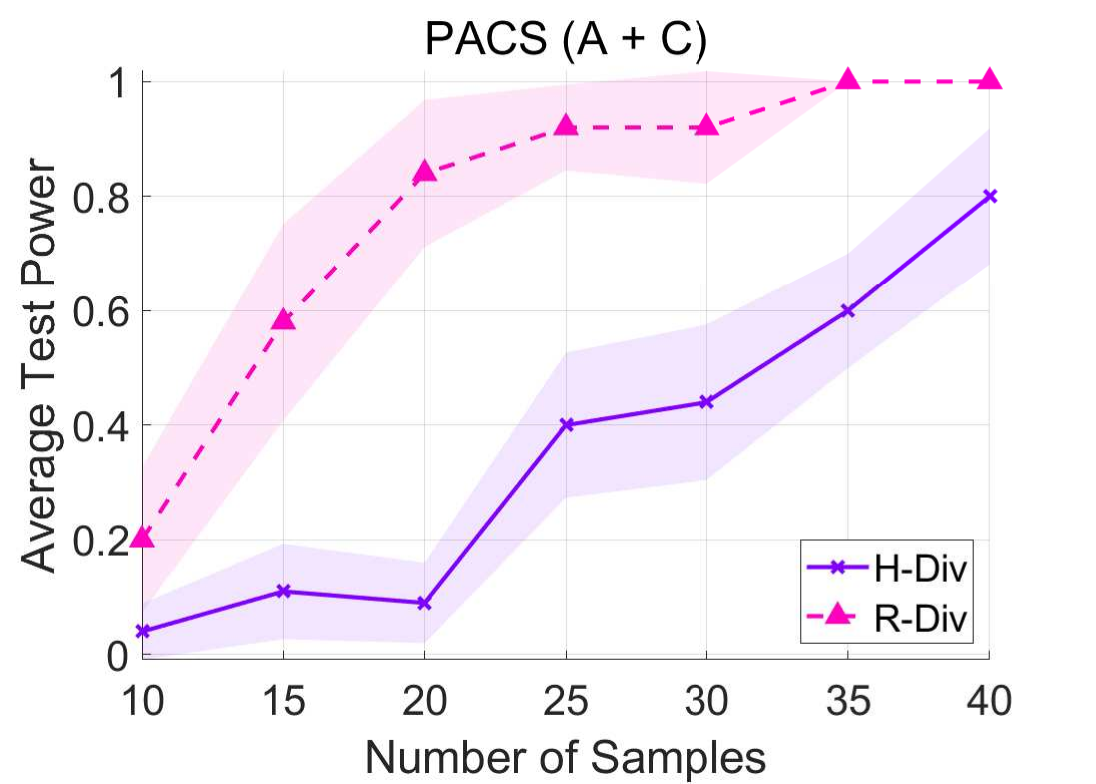}}
\caption{The average test power at the significant level $\alpha = 0.05$ on the art painting and cartoon domains of PACS.}
\label{fig:PACS}
\end{center}
\end{figure}

\subsection{Learning with Noisy Labels}
\label{ap:c10}
\begin{figure}[h!]
\centering
\subfigure[Accuracy]{
    \begin{minipage}{0.3\linewidth}
    \centering
    \includegraphics[width=1\columnwidth]{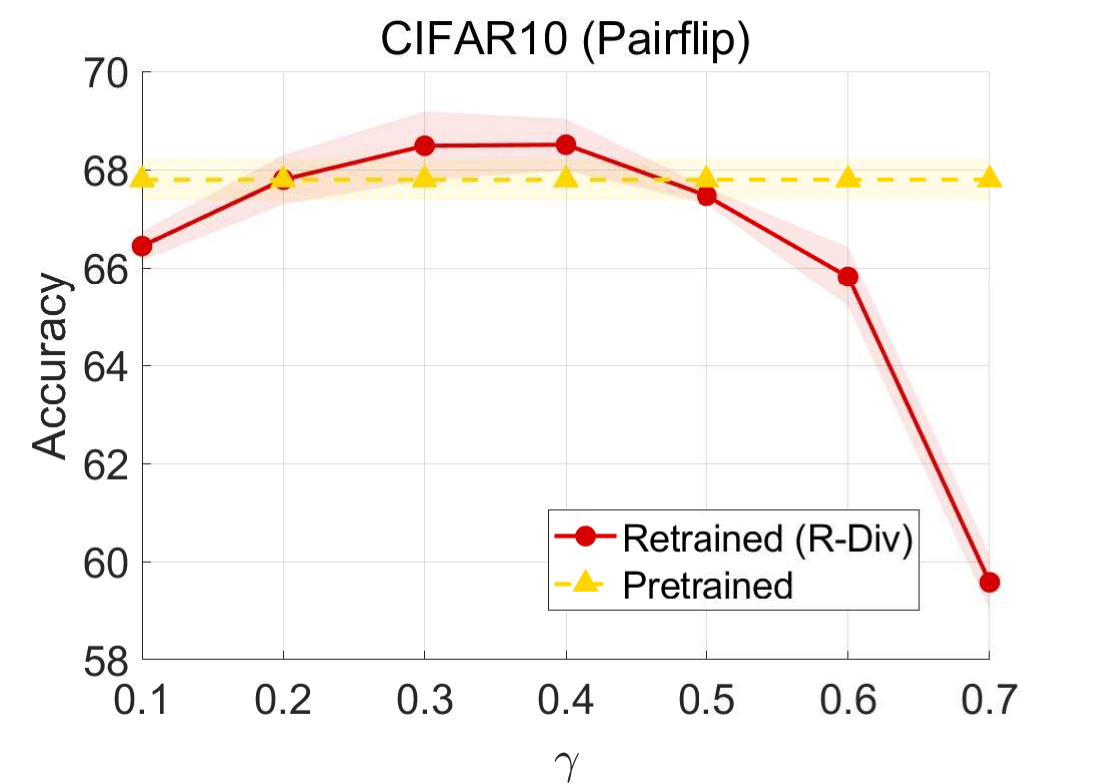}
    \label{fig:C10d}
    \vspace{0.05em}
    \end{minipage}
  }
\subfigure[Detection]{
    \begin{minipage}{0.3\linewidth}
    \centering
    \includegraphics[width=1\columnwidth]{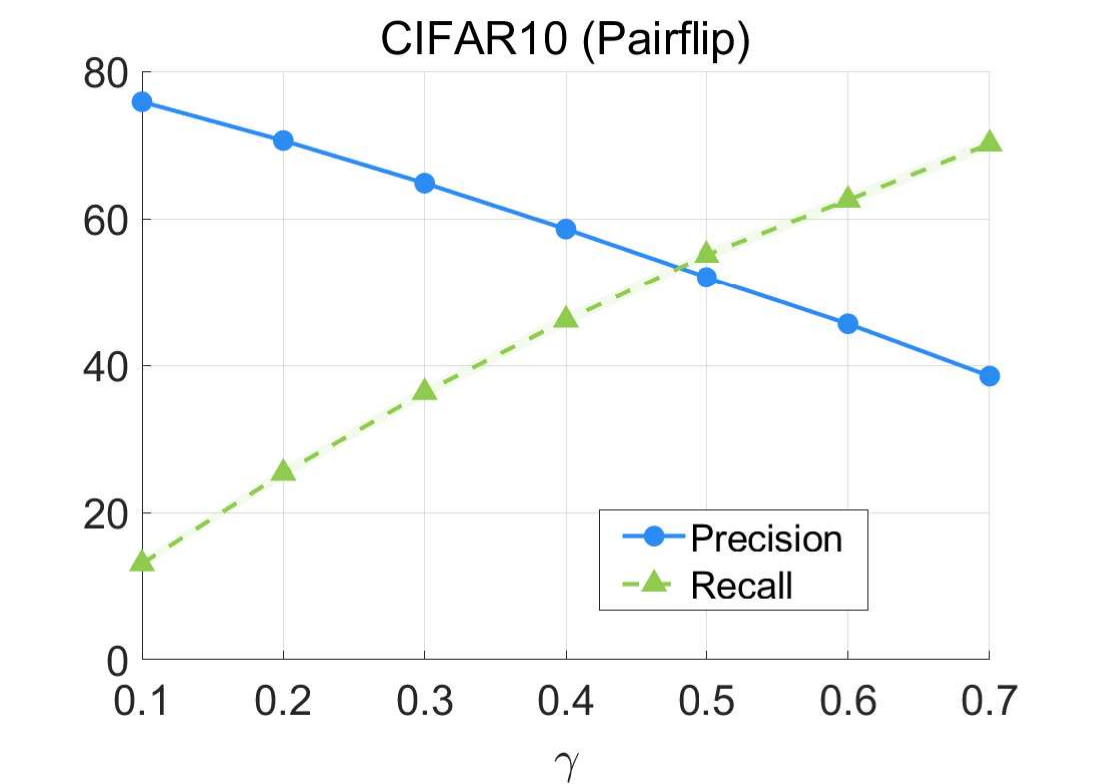}
    \label{fig:C10e}
    \vspace{0.05em}
    \end{minipage}
  }
\subfigure[Discrepancy]{
    \begin{minipage}{0.3\linewidth}
    \centering
    \includegraphics[width=1\columnwidth]{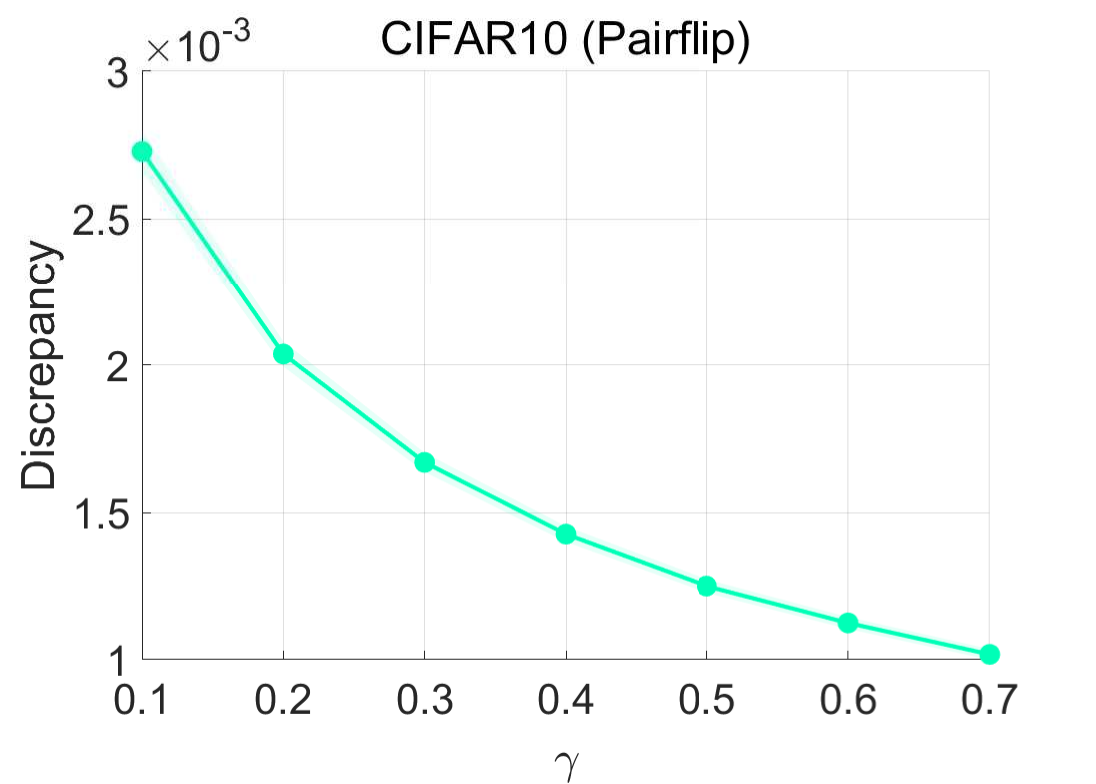}
    \label{fig:C10f}
    \vspace{0.05em}
    \end{minipage}
  }
\caption{Results on CIFAR10 with pair flipping. All values are averaged over five trials.
\textbf{Left:} Classification accuracy of pretrained and retrained networks.
\textbf{Middle:} Precision and recall rates of detecting clean and noisy samples.
\textbf{Right:} Discrepancy between predicted clean and noisy samples.
}
\label{fig:C10P}
\end{figure}

\end{document}